\newcommand{\citet}[1]{\citeauthor{#1} \shortcite{#1}}
\newcommand{\citep}{\cite}
\newtheorem{theorem}{Theorem}
\newtheorem*{theorem*}{Theorem}
\newtheorem{proposition}{Proposition}
\newtheorem*{proposition*}{Proposition}
\newtheorem{lemma}{Lemma}
\newtheorem{definition}{Definition}
\newtheorem{assumption}{Assumption}
\newcommand{\h}{\mathcal{H}}
\newcommand{\E}{\mathbb{E}}
\newcommand{\R}{\mathbb{R}}
\newcommand{\A}{\mathcal{A}}
\newcommand{\T}{\mathcal{T}}
\newcommand{\F}{\mathcal{F}}
\newcommand{\Z}{\mathcal{Z}}
\newcommand{\U}{\mathcal{U}}
\newcommand{\pth}[1]{\left( #1 \right) }
\newcommand{\abs}[1]{{\left| #1 \right| }}
\newcommand{\braces}[1]{\left\{ #1 \right\} }
\newcommand{\cmnt}[1]{\ignorespaces}
\newcommand{\pearldo}[1]{\mathbf{do}\pth{#1}}
\newcommand{\pie}[2]{\pi_e^{(#1)}(#2)}
\newcommand{\pib}[2]{\pi_b^{(#1)}(#2)}
\newcommand{\todo}[1]{\textcolor{red}{\{TODO: #1\}}}
\newcommand{\uri}[1]{\textcolor{green}{\{Uri: #1\}}}
\def\Let@{\def\\{\notag\math@cr}}
\title{Off-Policy Evaluation in 
Partially Observable Environments}
\author{Guy Tennenholtz \\ Technion Institute of Technology
\And Shie Mannor \\ Technion Institute of Technology
\And Uri Shalit \\ Technion Institute of Technology}
\begin{document}
%

\maketitle
\begin{abstract}
This work studies the problem of batch off-policy evaluation for Reinforcement Learning in partially observable environments. Off-policy evaluation under partial observability is inherently prone to bias, with risk of arbitrarily large errors. We define the problem of off-policy evaluation for Partially Observable Markov Decision Processes (POMDPs) and establish what we believe is the first off-policy evaluation result for POMDPs. In addition, we formulate a model in which observed and unobserved variables are decoupled into two dynamic processes, called a Decoupled POMDP. We show how off-policy evaluation can be performed under this new model, mitigating estimation errors inherent to general POMDPs. We demonstrate the pitfalls of off-policy evaluation in POMDPs using a well-known off-policy method, Importance Sampling, and compare it with our result on synthetic medical data. 
\end{abstract}

\section{Introduction}

Reinforcement Learning (RL) algorithms learn to maximize rewards by analyzing past experience in an unknown environment \citep{sutton1998reinforcement}. In the context of RL, off-policy evaluation (OPE) refers to the task of estimating the value of an \textit{evaluation policy} without applying it, using data collected under a different \textit{behavior policy} \citep{dann2014policy}, also unknown as a logging policy.

The problem of OPE has been thoroughly studied under fully-observable models.  In this paper we extend and define OPE for Partially Observable Markov Decision Processes (POMDPs). Informally, the goal of OPE in POMDPs is to evaluate the cumulative reward of an \textit{evaluation} policy $\pi_e$ which is a function of \textit{observed} histories, using a measure over the observed variables under a \textit{behavior} policy $\pi_b$ which is a function of an \textit{unobserved} state. We assume that we do not have access to the unobserved states, nor do we have any prior information of their model. In fact, in many cases we do not even know whether these states exist. These states are commonly referred to as confounding variables in the causal inference literature, whenever they affect both the reward as well as the behavior policy. OPE for POMDPs is highly relevant to real-world applications in fields such as healthcare, where we are trying to learn from observed policies enacted by medical experts, without having full access to the information the experts have in hand.

A basic observation we make is that \textbf{traditional  methods in OPE are not applicable to partially observable environments}. For this reason, we start by defining the OPE problem for POMDPs, proposing various OPE approaches. 
We define OPE for POMDPs in Section~\ref{sec: preliminaries}. Then, in Section~\ref{sec: pomdp evaluation}, Theorem~\ref{thm: main result} shows how past and future observations of an unobserved state can be leveraged in order to evaluate a policy, under a non-singularity condition of certain joint probability distribution matrices. To the best of our knowledge, this is the first OPE result for POMDPs.

In Section~\ref{sec: Decoupled POMDP} we build upon the results of Section~\ref{sec: pomdp evaluation} and propose a more involved POMDP model: the Decoupled POMDP model. This is a class of POMDPs for which observed and unobserved variables are distinctly partitioned. Decoupled POMDPs hold intrinsic advantages over POMDPs for OPE. The assumptions required for OPE are more flexible than those required in POMDPs, allowing for easier estimation (Theorem~\ref{thm: main result2}). 

In Section~\ref{sec: importance sampling} we attempt to answer the question as to why traditional OPE methods fail when parts of the state are unobserved. We emphasize the hardness of OPE in POMDPs through a conventional procedure known as Importance Sampling. We further construct an Importance Sampling variant that can be applied to POMDPs under an assumption about the reward structure. We then compare this variant to the OPE result of Section~\ref{sec: Decoupled POMDP} in a synthetic medical environment (Section~\ref{sec: experiments}), showing it is prone to arbitrarily large bias.

Before diving into the subtleties associated with off-policy evaluation in partially observable environments, we provide two examples in which ignoring unobserved variables can lead to erroneous conclusions about the relation between actions and their rewards. 

\noindent \textbf{Medical Treatment:} Consider a physician  monitoring a patient, frequently prescribing drugs and applying treatments according to her medical state. Some of the patient's information observed by the physician may be unavailable to us (e.g., the patient's socioeconomic status). A physician might tend to prescribe Drug A for her wealthier patients who can afford it. At the same time, wealthier patients tend to have better outcomes regardless of the specific drug. As we are unaware of the doctor's inner state for choosing this action, and have no access to the information she is using, a naive model would wrongly deduce that prescribing drug A is more effective than it actually is.

\noindent \textbf{Autonomous driving:} Consider teaching an autonomous vehicle to drive using video footage of cameras located over intersections. In this scenario, many unobserved variables may be present, including: objects unseen by the camera, the driver's current mood, social cues between drivers and pedestrians, and so on. Naive estimations of policies based on other drivers' behavior may result in catastrophic outcomes. For the purpose of this illustration, let us assume tired drivers tend to be involved in more traffic accidents than non-tired drivers. In addition, suppose non-tired drivers tend to drive faster than tired drivers. We wish to construct a safe autonomous car based on traffic camera footage. Since the tiredness of the driver is unobserved, a naive model might wrongly evaluate a good policy as one that drives fast.

Understanding how to evaluate the effects of actions in the presence of unobserved variables that affect both actions and rewards is the premise of a vast array of work in the field of causal inference. Our present work  owes much to ideas presented in the causal inference literature under the name of effect restoration \citep{kuroki2014measurement}. In our work, we build upon a technique introduced by \citet{miao2018identifying} on causal inference using proxy variables. In the two papers above the unobserved variable is static, and there is only one action taken. Our work deals with dynamic environments with sequential actions and action -- hidden-state feedback loops. Surprisingly, while this is in general a harder problem, we show that these dynamics can be leveraged to provide us with multiple noisy views of the unobserved state.

Our work sits at an intersection between the fields of RL and Causal Inference. While we have chosen to use terminology common to RL, this paper could have equivalently been written in causal inference terminology. We believe it is essential to bridge the gap between these two fields, and include an interpretation of our results using causal inference terminology in the appendix.
\cmnt{
\begin{figure}[t!]
    \begin{center}
    \begin{subfigure}{0.515\linewidth}
    \includegraphics[width=\linewidth]{figures/sufficient_history.png}
    \caption{SS-POMDP}
    \end{subfigure}
    \hspace{0.25in}
    \begin{subfigure}{0.388\linewidth}
    \includegraphics[width=\linewidth]{figures/pomdp.png}
    \caption{POMDP}
    \end{subfigure}
    \hspace{0.3in}
    \caption{ Causal graphs of a SS-POMDP and POMDP. The blue line marks the behavior policy (i.e., the data generating process), whereas the red line marks the wanted evaluation. In all graphs, $\pi_e$ is marked as only a function of the current observation for brevity. Nevertheless, it may also be a function of its previous observed history. \uri{It's a bit confusing in the SS-POMDP case having both the small $r$'s and the big $R$}}
    \label{fig: causal diagram}
    \end{center}
\end{figure}
}

\section{Preliminaries}
\label{sec: preliminaries}
\subsection{Partially Observable Markov Decision Process}

We consider a finite-horizon discounted Partially Observable Markov Decision Process (POMDP). A POMDP is defined as the 5-tuple $(\U, \mathcal{A}, \Z, \mathcal{P}, O, r,\gamma)$ \cite{puterman1994markov}, where $\U$ is a finite state space, $\mathcal{A}$ is a finite action space, $\Z$ is a finite observation space, ${P : \U \times \U \times \mathcal{A} \mapsto [0,1]}$ is a transition kernel, ${O: \U \times \Z \mapsto [0,1]}$ is the observation function, where $O(u,z)$ denotes the probability $P(z|u)$ of perceiving observation $z$ when arriving in state $u$, ${r : \U \times \mathcal{A} \to [0,1]}$ is a reward function, and $\gamma\in(0,1)$ is the discount factor. A diagram of the causal structure of POMDPs is depicted in Figure~\ref{fig: Decoupled POMDP}a.

A POMDP assumes that at any time step the environment is in a state $u \in \U$, an agent takes an action $a \in \A$ and receives a reward $r(u, a)$ from the environment as a result of this action. At any time $t$, the agent will have chosen actions and received rewards for each of the $t$ time steps prior to the current one. The agent's observable history at time $t$ is defined by ${h_t^o = (z_0, a_0, \hdots, z_{t-1}, a_{t-1}, z_t)}$. We denote the space of observable histories at time $t$ by $\h_t^o$.

We consider trajectories and observable trajectories. A trajectory of length $t$ is defined by the sequence $\tau = (u_0, z_0, a_0, \hdots, u_t, z_t, a_t)$. Similarly, an observable trajectory of length $t$ is defined by the sequence ${\tau^o = (z_0, a_0, \hdots, z_{t-1}, a_{t-1}, z_t, a_t)}$. We denote the space of trajectories and observable trajectories of length $t$ by $\T_t$ and $\T_t^o$, respectively. Finally, a policy $\pi$ is any stochastic, time-dependent\footnote{For brevity we sometimes denote policies by $\pi$, though they may depend on time such that $\pi = \pi(t)$.} mapping from a measurable set $\mathcal{X}_t \subset \T_t$ to the set of probability measures on the Borel sets of $\mathcal{A}$, denoted \mbox{by $\mathcal{B}(\A)$.}

For any time $t$, and trajectory $\tau \in \T_t$, we define the cumulative reward
\begin{equation*}
R_t(\tau) = \sum_{k=0}^t \gamma^k r(u_k, a_k).
\end{equation*}
The above is also known as the discounted return. Given any policy $\pi$ and initial distribution over states $\U$, denoted by $\nu_0$, we define the expected discounted return at time $L$ by
\begin{equation*}
v_L(\pi ; \nu_0) = \E \pth{R_L(\tau) \middle| u_0 \sim \nu_0, \tau \sim \pi}.
\end{equation*}
When clear from context, we will assume $\nu_0$ and $L$ are known and fixed and simply write $v(\pi)$.

\subsection{Policy Evaluation}

Off-policy evaluation (OPE) considers two types of policies: a behavior policy and an evaluation policy, as defined below.
\begin{definition}[behavior and evaluation policies]~\\
\label{def: policy types}
A \textit{behavior} policy, denoted by $\pi_b^{(t)}$, is a stochastic, time-dependent mapping from states $\U$ to $\mathcal{B}(\A)$. \footnote{We consider behavior policies to be functions of unobserved states. This assumption is common in MDPs in which the state is observed, as it is known that there exists a stationary optimal policy that is optimal \citep{romanovskii1965existence}. In addition, the unobserved state is assumed to contain all required information for an agent to make an optimal decision. } \\
An \textit{evaluation} policy, denoted by $\pi_e^{(t)}$, is a stochastic, time-dependent mapping from observable histories $\h_t^o$ to $\mathcal{B}(\A)$.
\end{definition}

For any time step $t$ and policy $\pi$ let $P^\pi(\cdot)$ be the measure over observable trajectories $\h_t^o$, induced by policy $\pi$. We will denote this measure by $P^b, P^e$, whenever $\pi$ is a behavior or evaluation policy, respectively. We are now ready to define off-policy evaluation in POMDPs:
\\
\begin{adjustwidth}{20.5pt}{20.5pt}
\textit{The goal of off-policy evaluation in POMDPs is to evaluate $v_L(\pi_e)$ using the measure $P^b(\cdot)$ over observable trajectories $\T_L^o$ and the given policy $\pi_e$.}
\end{adjustwidth}
~\\
This corresponds to the scenario in which data comes from a system which records an agent taking actions based on her own information sources ($u$), and we want to evaluate a policy $\pi_e$ which we learn based only on information available to the learning system ($\tau^o$).

\subsection{Vector Notations}

Let $x, y, z$ be random variables accepting values in $\braces{x_1, \hdots, x_{n_1}}, \braces{y_1, \hdots, y_{n_2}}, \braces{z_1, \hdots, z_{n_3}}$, respectively, and let $\F$ be a filtration that includes all information: states, observations, actions, and rewards. We denote by $P(X | y, \F)$ the $n_1 \times 1$ column vector with elements ${\pth{P(X | y, \F)}_i = P(x_i | y, \F)}$. Similarly we denote by $P(x | Y, \F)$ the $1 \times n_2$ row vector with elements ${\pth{P(x | Y, \F}_i = P(x | y_i, \F)}$. Note that ${P(x | Y, \F)P(Y | \F) = P(x | \F)}$. Finally, let ${P(X | Y, \F)}$ be the ${n_1 \times n_2}$ matrix with elements ${\pth{P(X | Y, \F)}_{ij} = P(x_i| y_j, \F)}$. Note that if $x$ is independent of $z$  given $y$ then we have the matrix equality ${P(X | Y, \F)P(Y | Z, \F) = P(X | Z, \F)}$.

We will sometimes only consider subsets of the above matrices. For any index sets ${I \subset \{1, \hdots n_1\}, J \subset \{1, \hdots n_2\}}$ let $P_{(I, J)}(X | Y, \F)$ be the matrix with elements ${\pth{P_{(I, J)}(X | Y, \F)}_{ij} = P(x_{I_i}| y_{J_j}, \F)}$.

\begin{figure*}[t!]
    \begin{center}
    \begin{subfigure}{0.16\linewidth}
    \includegraphics[width=\linewidth]{figures/pomdp.png}
    \caption{POMDP}
    \end{subfigure}
    \hspace{0.5in}
    \begin{subfigure}{0.52\linewidth}
    \includegraphics[width=0.475\linewidth]{figures/depomdp_pib.png}%
    \hfill
    \includegraphics[width=0.475\linewidth]{figures/depomdp_pie.png}
    \caption{Decoupled POMDP}
    \end{subfigure}
    \caption{A causal diagram of a POMDP (a) and a Decoupled POMDP (b). In Decoupled POMDPs, observed and unobserved states are separated into two distinct processes, with a coupling between them at each time step. Diagrams depicts the causal dependence of a behavior policy and evaluation policies. While evaluation policies are depicted to depend on the current observation alone, they can depend on any observable history $h_t^o$. }
    \label{fig: Decoupled POMDP}
    \end{center}
\end{figure*}

\section{Policy Evaluation for POMDPs}
\label{sec: pomdp evaluation}

In this section we show how past and future observations can be leveraged in order to create an unbiased evaluation of $\pi_e$ under specific invertibility conditions. 
It is a generalization of the bandit-type result presented in \citet{miao2018identifying}, where causal effects are inferred in the presence of an unobserved discrete confounder, provided one has two conditionally independent views of the confounder which are non-degenerate (i.e., the conditional probability matrices are invertible). We show how time dynamics readily give us these two conditionally independent views - using the past and future observations as two views of the unobserved state. 

\noindent For any ${\tau^o = (z_0, a_0, \hdots, z_t, a_t) \in \T^o_t}$ we define the generalized weight matrices
\begin{equation*}
    W_i(\tau^o) =
    P^b(Z_i| a_i, Z_{i-1})^{-1}
    P^b(Z_i, z_{i-1} | a_{i-1}, Z_{i-2})
\end{equation*}
for $i \geq 1$, and
\begin{equation*}
    W_0(\tau^o)
    =
    P^b(Z_0| a_0, Z_{-1})^{-1}P^b(Z_0).
\end{equation*}
 Here, we assume there exists an observation of some time step before initial evaluation (i.e., $t < 0$), which we denote by $z_{-1}$. Alternatively, $z_{-1}$ may be an additional observation that is independent of $z_0$ and $a_0$ given $u_0$. Note that the matrices $W_i$ can be estimated from the observed trajectories of the behavior distribution. We then have the following result.
\begin{theorem}[POMDP Evaluation]
\label{thm: main result}
Assume $\abs{\Z} \geq \abs{\U}$ and that $P^b(Z_i | a_i, Z_{i-1})$ are invertible for all $i$ and all $a_i \in \A$.
For any $\tau^o \in \T^o_t$ denote
 \begin{align*}
     &\Pi_e(\tau^o) = \prod_{i=0}^t \pie{i}{a_i | h_i^o}, ~~
     &\Omega(\tau^o) = \prod_{i=0}^{t} W_{t-i}(\tau^o).
 \end{align*}
Then
\begin{equation*}
    P^e(r_t) 
    =
    \sum_{\tau^o \in \T^o_t}
    \Pi_e(\tau^o)
    P^b(r_t, z_t | a_t, Z_{t-1})
    \Omega(\tau^o).
\end{equation*}
\end{theorem}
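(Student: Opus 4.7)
Since $\pi_e^{(i)}$ depends only on the observable history $h_i^o$, the joint under $\pi_e$ factors as $P^e(\tau^o, r_t) = \Pi_e(\tau^o) \cdot f(\tau^o, r_t)$, where
\[
f(\tau^o, r_t) = \sum_{u_0, \ldots, u_t} \nu_0(u_0) \prod_{i=0}^t O(u_i, z_i) \prod_{i=1}^t P(u_i \mid u_{i-1}, a_{i-1}) \cdot P(r_t \mid u_t, a_t)
\]
is a policy-independent function of the POMDP dynamics --- the $\pi_e$ factors pull out of the $u$-sum because they do not involve the $u_i$'s. Summing over $\tau^o$ yields $P^e(r_t)$, so the claim reduces to the algebraic identity $f(\tau^o, r_t) = P^b(r_t, z_t \mid a_t, Z_{t-1})\,\Omega(\tau^o)$, an identity involving only the POMDP generative model and the behavior distribution; the evaluation policy has entirely dropped out.

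My plan is to translate $f$ into a latent-basis matrix product and then swap each latent factor for an observable one. Let $T_a$ denote the $|\U|\times|\U|$ transition matrix $[T_a]_{u, u'} = P(u \mid u', a)$, $O_z := \mathrm{diag}(O(\cdot, z))$, $O$ the $|\Z| \times |\U|$ emission matrix, and $g(u) := P(r_t \mid u, a_t)$. Then $f(\tau^o, r_t) = g^\top O_{z_t} T_{a_{t-1}} O_{z_{t-1}} \cdots T_{a_0} O_{z_0} \nu_0$. Using the POMDP's conditional-independence structure (observations depend only on their state; behavior actions only on state), derive the building blocks
\begin{align*}
P^b(Z_0) &= O \nu_0, \\
P^b(Z_i \mid a_i, Z_{i-1}) &= O \, M_i^{a_i}, \\
P^b(Z_i, z_{i-1} \mid a_{i-1}, Z_{i-2}) &= O \, T_{a_{i-1}} O_{z_{i-1}} M_{i-1}^{a_{i-1}},
\end{align*}
where $M_i^{a} := P^b(U_i \mid a, Z_{i-1})$. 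Substituting into the definition of $W_i$ and using $O^{-1} O = I$ yields $W_i = (M_i^{a_i})^{-1} T_{a_{i-1}} O_{z_{i-1}} M_{i-1}^{a_{i-1}}$ for $i \geq 1$ and $W_0 = (M_0^{a_0})^{-1} \nu_0$.

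The key step is a telescoping cancellation: in the product $\Omega(\tau^o) = W_t W_{t-1} \cdots W_0$, every consecutive $M_i^{a_i} (M_i^{a_i})^{-1}$ collapses to the identity, leaving $\Omega(\tau^o) = (M_t^{a_t})^{-1} T_{a_{t-1}} O_{z_{t-1}} \cdots T_{a_0} O_{z_0} \nu_0$. A parallel computation gives $P^b(r_t, z_t \mid a_t, Z_{t-1}) = g^\top O_{z_t} M_t^{a_t}$; left-multiplying this into $\Omega$ absorbs the remaining $(M_t^{a_t})^{-1}$ and reproduces exactly the latent-basis product defining $f(\tau^o, r_t)$, closing the identity.

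The main obstacle is the non-square regime $|\Z| > |\U|$: the factorization $P^b(Z_i \mid a_i, Z_{i-1}) = O M_i^{a_i}$ has rank at most $|\U|$, so as a $|\Z| \times |\Z|$ matrix it cannot be truly invertible, and the invertibility hypothesis must be read through the paper's $P_{(I, J)}$ sub-matrix notation. One selects an index set $I \subset \{1, \ldots, |\Z|\}$ of size $|\U|$ for which the restricted $P_{(I, I)}^b(Z_i \mid a_i, Z_{i-1})$ are invertible, replaces $O$ and each $M_i^{a_i}$ with their $|\U| \times |\U|$ restrictions, and verifies that the three building-block identities survive the restriction (equivalently, that the restricted $O$ is still a left-inverse on the image of each $M_i^{a_i}$); given that, the telescoping argument goes through verbatim.
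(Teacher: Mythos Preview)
Your proof is correct and follows the paper's route: the latent factorization $f(\tau^o, r_t) = g^\top O_{z_t} T_{a_{t-1}} \cdots O_{z_0}\nu_0$ is precisely the content of Lemma~1, and the telescoping via $M_i^{a_i} = P^b(U_i \mid a_i, Z_{i-1})$ is exactly Lemmas~2--3 instantiated with $x_i = z_{i-1}$, $y_i = z_i$. The only difference is packaging---the paper states the swap step abstractly for generic proxies $x_i, y_i$ satisfying the relevant conditional independences (an abstraction it then reuses verbatim for Theorem~2), whereas you unpack directly into the POMDP primitives $O, T_a, O_z, \nu_0$; your observation about the $|\Z| > |\U|$ rank deficiency is apt, and the paper addresses it explicitly only in the Decoupled-POMDP theorem via the $P_{(I,J)}$ sub-matrix device.
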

\begin{proof}
See Appendix.
\end{proof}

Having evaluated $P^e(r_t)$ for all $0 \leq t \leq L$ suffices in order to evaluate $v(\pi_e)$. Theorem~\ref{thm: main result} lets us evaluate $v(\pi_e)$ without access to the unknown states $u_i$. It uses past and future observations $z_t$ and $z_{t-1}$ as proxies of the unknown state. Its main assumptions are that the conditional distribution matrices $P^b(Z_i | a_i, Z_{i-1})$ and $P^b(U_i | a_i, Z_{i-1})$ are invertible. In other words, it is assumed that enough information is transferred from states to observations between time steps. Consider for example the case where $U_i$ and $Z_i$ are both binary, then a sufficient condition for invertibility to hold is to have $p(z_i=1|z_{i-1}=1,a_i) \neq p(z_i=1|z_{i-1}=0,a_i)$ for all values of $a_i$.
A trivial case in which this assumption does not hold is when $\{u_i\}_{0 \leq i \leq L}$ are i.i.d. In such a scenario, the observations $z_i, z_{i-1}$ do not contain enough useful information of the unobserved state $u_i$, and additional independent observations of $u_i$ are needed. In the next section we will show how this assumption can be greatly relaxed under a decoupled POMDP model. Particularly, in the next section we will devise an alternative model for partial observability, and analyze its superiority over POMDPs in the context of OPE. 


\section{Decoupled POMDPs}
\label{sec: Decoupled POMDP}
Theorem~\ref{thm: main result} provides an exact evaluation of $\pi_e$. However it assumes non-singularity of several large stochastic matrices. While such random matrices are likely to be invertible (see e.g., \citet{bordenave2012circular} Thm~1.4), estimating their inverses from behavior data can lead to large approximation errors or require very large sample sizes. This is due to the  structure of POMDPs, which is confined to a causal structure in which unobserved states form observations. This restriction is present even when $O(u, z)$ is a deterministic measure. In many settings, one may detach the effect of unobserved and observed variables. In this section, we define a Decoupled Partially Observable Markov Decision Process (Decoupled POMDP), in which the state space is decoupled into unobserved and observed variables. 

Informally, in Decoupled POMDPs the state space is factored into observed and unobserved states. Both the observed and unobserved states follow Markov transitions. In addition, unobserved states emit independent observations. As we will show, Decoupled POMDPs are more appropriate for OPE under partial observability. Contrary to Theorem~\ref{thm: main result}, Decoupled POMDPs use matrices that scale with the support of the unobserved variables, which, as they are decoupled from observed variables, are of much smaller cardinality. 

\begin{definition}
We define a finite-horizon discounted Decoupled Partially Observable Markov Decision Process (Decoupled POMDP) as the tuple $(\U, \Z, \mathcal{O}, \mathcal{A}, \mathcal{P}, \mathcal{P}_{\mathcal{O}}, r, \gamma)$, where ${\Z}$ and ${\U}$ consist of an observed and unobserved finite state space, respectively. $\mathcal{A}$ is the action space, $\mathcal{O}$ is the independent observation space, ${\mathcal{P} : \Z \times \U \times \Z \times \U \times \A  \mapsto [0,1]}$ is the transition kernel, where $\mathcal{P}(z', u'|z, u, a)$ is the probability of transitioning to state $(z', u')$ when visiting state $(z, u)$ and taking action $a$, ${\mathcal{P}_\mathcal{O}: \U \times \mathcal{O} \mapsto [0,1]}$ is the independent observation function, where $\mathcal{P}_\mathcal{O}(o | u)$ is the probability of receiving observation $o$ when arrive at state $u$, ${r : \U \times \Z \times \mathcal{A} \to [0,1]}$ is a reward function, and $\gamma\in(0,1)$ is the discount factor.
\end{definition}

A Decoupled POMDP assumes that at any time step the environment is in a state $(u, z) \in \U \times \Z$, an agent takes an action $a \in \A$ and receives a reward $r(u, z, a)$ from the environment. The agent's observable history is defined by ${h_t^o = (z_0, o_0, a_0, \hdots, z_{t-1}, o_{t-1}, a_{t-1}, z_t, o_t)}$. With abuse of notations we denote the space of observable histories at time $t$ by $\h_t^o$. We similarly define trajectories and observable trajectories of length $t$ by $\tau = (u_0, z_0, o_0, a_0, \hdots, u_t, z_t, o_t, a_t)$ and ${\tau^o = (z_0, o_0, a_0, \hdots, z_t, o_t, a_t)}$, respectively. With abuse of notations, we denote the space of trajectories and observable trajectories of length $t$ by $\T_t$ and $\T_t^o$, respectively. 
In Figure~\ref{fig: Decoupled POMDP}(b) we give an example of a Decoupled POMDP (used in Theorem \ref{thm: main result2}) for which $z_i$ causes $u_i$. 

Decoupled POMDPs hold the same expressive power and generality of POMDPs. To see this, one may remove the observed state space $\Z$ to recover the original POMDP model. Nevertheless, as we will see in Theorem~\ref{thm: main result2}, Decoupled POMDPs also let us leverage their structure in order to achieve tighter results. Similar to POMDPs, OPE for Decoupled POMDPs considers behavior and evaluation policies.  
\begin{definition}[behavior and evaluation policies]~\\
A \textit{behavior} policy, denoted by $\pi_b^{(t)}$, is a stochastic, time-dependent mapping from states $\U \times \Z$ to $\mathcal{B}(\A)$. \\
An \textit{evaluation} policy, denoted by $\pi_e^{(t)}$, is a stochastic, time-dependent mapping from observable histories $\h_t^o$ to $\mathcal{B}(\A)$.
\end{definition}

The goal of OPE for Decoupled POMDPs is defined similarly as general POMDPs. Decoupled POMDPs allow us to model environments in which observations are not contained in the unknown state. They are decoupled by a Markovian processes which governs both observed and unobserved variables. In what follows, we will show how this property can be leveraged for evaluating a desired policy.

\subsection{Policy Evaluation for Decoupled POMDPs}

For all $i$, let $K_i \subset \{1, \hdots, \abs{\mathcal{O}}\}, J_i \subset \{1, \hdots, \abs{\mathcal{Z}}\}$ such that $\abs{K_i} = \abs{J_i} = \abs{U}$. Similar to before, for any ${\tau^o \in \T_t^o}$, we define the generalized weight matrices
\begin{align*}
    G_i(\tau^o)
    &=
    P^b_{(K_i,J_{i-1})}(O_i| z_i, a_i, Z_{i-1})^{-1}
    \times \\
    &\times P^b_{(K_i,J_{i-2})}(O_i, o_{i-1}, z_i | z_{i-1}, a_{i-1}, Z_{i-2}),
\end{align*}
for $i \geq 1$ and
\begin{equation*}
    G_0(\tau^o)
    =
    P^b_{(K_0,J_{-1})}(O_0| z_0, a_0, Z_{-1})^{-1}
    P^b_{K_0}(O_0 | z_0)P^b(z_0).
\end{equation*}
We then have the following result.

\begin{theorem}[Decoupled POMDP Evaluation]
\label{thm: main result2}
Assume $\abs{\mathcal{Z}}, \abs{\mathcal{O}} \geq \abs{\U}$ and that there exist index sets $K_i, J_i$ such that ${\abs{K_i} = \abs{J_i} = \abs{\U}}$ and $P_{(K_i,J_{i-1})}^b(O_i | a_i, z_i, Z_{i-1})$ are invertible $\forall i, a_i, z_i \in \A \times \Z$. In addition assume that $z_{i-1}$ is independent of $u_{i+1}$ given $z_{i+1}, a_i, u_i$, $\forall i$ under $P^b$. \\
For any ${\tau^o \in \T_t^o}$, denote by
 \begin{align*}
     &\Pi_e(\tau^o) =
     \prod_{i=0}^t \pie{i}{a_i | h_i^o}, ~~
     &\Omega(\tau^o) = \prod_{i=0}^{t}
    G_{t-i}(\tau^o).
 \end{align*}
Then
\begin{equation*}
    P^\pi(r_t)
    =
    \sum_{\tau^o \in \T_t^o}
    \Pi_e(\tau^o)
    P^b_{K_{t-1}}(r_t, o_t| z_t, a_t, Z_{t-1})
    \Omega(\tau^o).
\end{equation*}
\end{theorem}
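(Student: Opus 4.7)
The plan is to follow the same overall strategy as in the proof of Theorem~\ref{thm: main result}, while exploiting the extra structural flexibility provided by the independent observations $o_i$ in the Decoupled POMDP. The starting point is the standard identity that under the evaluation policy
\[
P^e(r_t) = \sum_{\tau^o \in \T_t^o} \Pi_e(\tau^o)\, P^{\mathrm{do}(\tau^o)}(r_t, z_0, o_0, \ldots, z_t, o_t),
\]
where the interventional distribution on the right is obtained by marginalizing the joint Decoupled POMDP law (factored along the graph in Figure~\ref{fig: Decoupled POMDP}(b)) over the unobserved states $u_0, \ldots, u_t$. The aim is to rewrite this summand using only quantities estimable from behavior data.

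The central algebraic observation is that the independent-observation assumption, $o_i \perp \cdot \mid u_i$, yields the factorization
\[
P^b_{(K_i, J_{i-1})}(O_i \mid z_i, a_i, Z_{i-1}) = P^b_{K_i}(O_i \mid U_i)\, P^b_{(\cdot, J_{i-1})}(U_i \mid z_i, a_i, Z_{i-1}),
\]
with both factors of size $\abs{\U} \times \abs{\U}$. Invertibility of the left-hand side therefore forces invertibility of each factor, and the inverse in $G_i$ can be written as $P^b_{(\cdot, J_{i-1})}(U_i \mid z_i, a_i, Z_{i-1})^{-1} P^b_{K_i}(O_i \mid U_i)^{-1}$. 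Expanding the second matrix in $G_i$ by marginalizing over $u_i$ and $u_{i-1}$ pulls out a leading factor of $P^b_{K_i}(O_i \mid U_i)$, which cancels against $P^b_{K_i}(O_i \mid U_i)^{-1}$. The net effect is that $G_i$ acts as a pure latent-space propagator, expressed entirely through observable matrices.

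Given this, I would proceed by induction on $t$. The inductive step expands the interventional joint along the transition at time $i$, inserts the factorization above, and lets the $P^b(O_i \mid U_i)$ factors cancel. The extra conditional independence assumption $z_{i-1} \perp u_{i+1} \mid z_{i+1}, a_i, u_i$ under $P^b$ is exactly what allows us to fold $z_{i-1}$ into the conditioning set at the next step without introducing residual dependence on intermediate latents; this is what makes adjacent $G_i$ and $G_{i-1}$ telescope inside the product $\Omega(\tau^o) = \prod_{i=0}^{t} G_{t-i}(\tau^o)$. A final left-multiplication by $P^b_{K_{t-1}}(r_t, o_t \mid z_t, a_t, Z_{t-1})$ plays the role of the reward factor, in the same way as in Theorem~\ref{thm: main result}.

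The step I expect to be the main obstacle is the careful bookkeeping of the restricted index sets $K_i, J_i$ along the chain product: one must verify at every junction that the column indices of $G_i$ can be identified with the row indices of $G_{i-1}$ in the correct conditional sense, and that the CI assumption applies there so that the telescoping actually collapses. A naive marginalization would leave a residual matrix coupling intermediate $u_i$'s across time that would block the identification; the assumption is designed precisely so that this does not happen.
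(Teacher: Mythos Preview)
Your proposal is correct and follows essentially the same route as the paper. The paper packages the argument into two auxiliary lemmas---first expressing $P^e(r_t)$ as a product over latent-indexed matrices $P^b(U_{i+1},z_{i+1},o_i\mid a_i,z_i,U_i)$, then applying a Miao-style substitution $P^b(W,z',o\mid a,z,U)=P^b(W,z',o\mid a,z,X)P^b(Y\mid a,z,X)^{-1}P^b(Y\mid a,z,U)$ with $X_i=Z_{i-1}$, $Y_i=O_i$---which is exactly your direct factorization $P^b(O_i\mid\cdots,Z_{i-1})=P^b(O_i\mid U_i)P^b(U_i\mid\cdots,Z_{i-1})$ and cancellation, with the telescoping step (where the assumption $z_{i-1}\perp u_{i+1}\mid z_{i+1},a_i,u_i$ enters) isolated into a separate lemma rather than embedded in an induction.
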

\begin{proof}
See Appendix.
\end{proof}

Approximating Theorem~\ref{thm: main result2}'s result under finite datasets is more robust than Theorem~\ref{thm: main result}. For one, the matrices are of size $\abs{\mathcal{U}}$, which can be much smaller than $\abs{\mathcal{Z}} + \abs{\mathcal{U}}$. In addition, and contrary to Theorem~\ref{thm: main result}, the result holds for any index set $J_i, K_i \subset [\abs{\mathcal{Z}}]$ of cardinality $\abs{\mathcal{U}}$.  We can thus choose any of $\abs{\mathcal{Z}} \choose \abs{\mathcal{U}}$ possible subsets from which to approximate the matrices $G_i(\tau)$. This enables us to choose indices for solutions with desired properties (e.g., small condition numbers of the matrices). We may also construct an estimator based on a majority vote of the separate estimators. Finally, we note that solving for $G_i(\tau)$ for any $J_i, K_i$ can be done using least squares regression. 

Up to this point we have shown how the task of OPE can be carried out in two settings: general POMDPs and Decoupled POMDPs. The results in Theorems~\ref{thm: main result} and~\ref{thm: main result2} depend on full trajectories, which we believe are a product of the high complexity inherent to the problem of OPE with unobservable states. In the next section, we demonstrate the hardness inherent to OPE in these settings through an alternative OPE method - a variant of an Importance Sampling method \citep{precup2000eligibility}. We then experiment and compare these different OPE techniques on a synthetic medical environment.

\section{Importance Sampling and its Limitations in Partially Observable Environments}
\label{sec: importance sampling}

In previous sections we presented OPE in partially observable environments, and provided, for what we believe is the first time, techniques of exact evaluation. A reader familiar with OPE in fully-observable environments might ask, why do new techniques need to be established and where do traditional methods fail? To answer this question, in this section we demonstrate the bias that may arise under the use of long-established OPE methods. More specifically, we demonstrate the use of a well-known approach, Importance Sampling (IS): a reweighting of rewards generated by the behavior policy, $\pi_b$, such that they are equivalent to unbiased rewards from an evaluation policy $\pi_e$.

Let us begin by defining the IS procedure for POMDPs. Suppose we are given a trajectory $\tau \in \T_t$. We can express $P^e(\tau)$ using $P^b(\tau)$ as
\begin{align*}
    &P^e(\tau) = P^e(u_0, z_0, a_0, \hdots ,u_t, z_t, a_t)\\
    &=
    \nu_0(u_0) \prod_{i=0}^{t-1}P^b(u_{i+1} | u_i, a_i) \prod_{i=0}^{t}P^b(z_{i} | u_{i})\pie{i}{a_i | h_i^o}\\
    &=
    P^b(\tau) \prod_{i=0}^{t} \frac{\pie{i}{a_i | h_i^o}}{\pib{i}{a_i | u_i}}.
\end{align*}
We refer to $w_i = \frac{\pie{i}{a_i | h_i^o}}{\pib{i}{a_i | u_i}}$ as the importance weights. The importance weights allow us to evaluate $v(\pi_e)$ using the data generating process $P^b(\cdot)$ as
\begin{align}
    \text{IS}(\pi_e, w) 
    :=
    \E\pth{R_L(\tau) \prod_{i=0}^{L}w_i \middle| \tau \sim \pi_b, u_0 \sim \nu_0 }.
    \label{eq: importance sampling}
\end{align}
  Note that $v(\pi_e) = \text{IS}(\pi_e, w)$. Unfortunately, the above requires the use of $\pib{i}{a_i | u_i}$, which are unknown and \textit{cannot} be estimated from data, as $u_i$ are unobserved under the POMDP model.

\begin{figure}[t!]
    \begin{center}
    \includegraphics[width=0.8\linewidth]{figures/pomdp_example2.png}
    \caption{ An example of a POMDP with 6 states and 2 observations for which importance sampling with importance weights $w_i = \frac{\pie{i}{a_i | h^o_i}}{P^b(a_{i} | h^o_i)}$ is biased. Numbers on arrows correspond to probabilities. Arrows marked by $a_0, a_1$ correspond to rewards or transitions of these actions. Rewards depend on values of $\alpha > 0$. Initial state distribution is $\nu_0 = (\frac{1}{2}, \frac{1}{2})$.}
    \label{fig: pomdp_example}
    \end{center}
\end{figure}

\begin{figure*}[ht!]
    \begin{center}
    \begin{subfigure}{0.3\linewidth}
    \includegraphics[width=\linewidth]{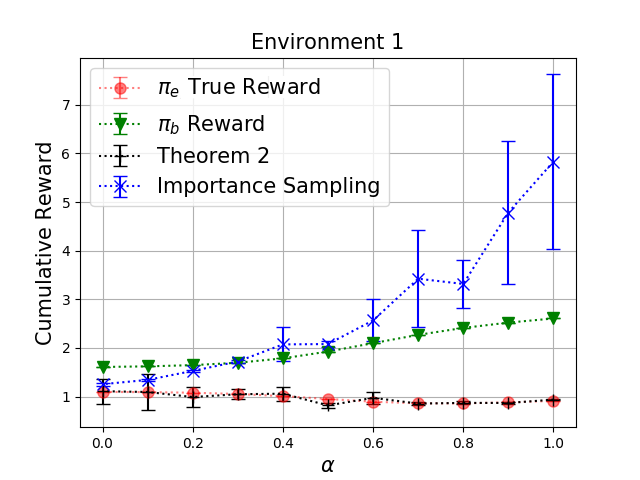}
    \end{subfigure}
    \begin{subfigure}{0.3\linewidth}
    \includegraphics[width=\linewidth]{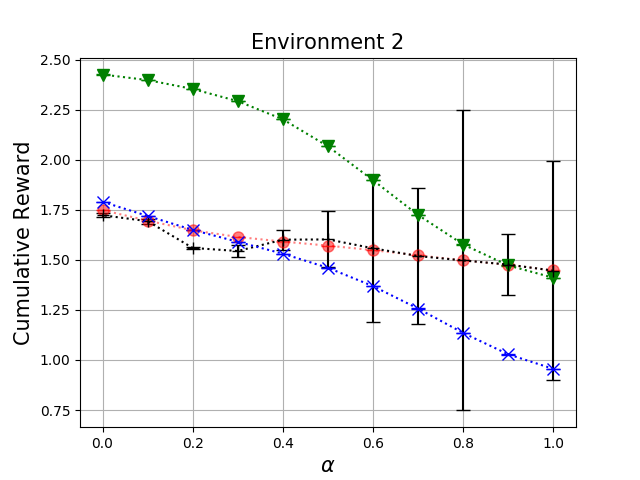}
    \end{subfigure}
    \begin{subfigure}{0.3\linewidth}
    \includegraphics[width=\linewidth]{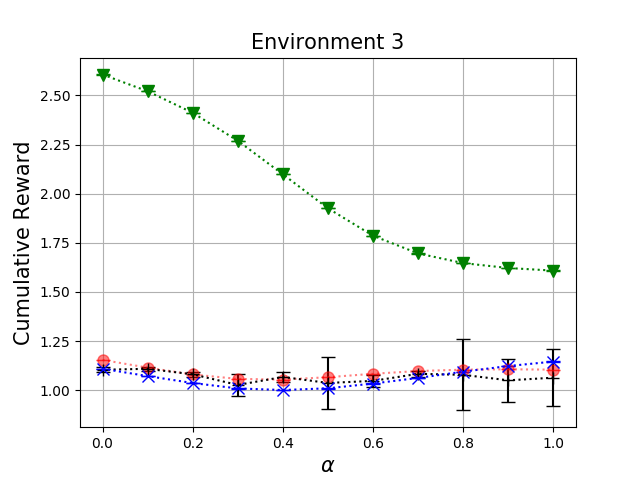}
    \end{subfigure}
    \caption{Comparison of cumulative reward approximation on three distinct synthetic environments. $\pi_b$ (green, triangles) and $\pi_e$ (red, circles) plots depict the true cumulative rewards of the behavior and evaluation policies, respectively. Ideally we would want the black ``Theorem 2'' curve and the blue IS curve to match the red curve of the true reward.}
    \label{fig: Experiments}
    \end{center}
\end{figure*}

\subsection{Sufficient Condition for Importance Sampling}

As Equation~\eqref{eq: importance sampling} does not resolve the off-policy problem, it remains an open question whether an IS procedure can be used in general POMDPs. Here, we give sufficient conditions for which a variant of IS can properly evaluate $\pi_e$. More specifically, we assume a POMDP which satisfies the following condition.
\begin{assumption}[Sufficient Condition for IS]
\begin{align*}
&\exists f_L: \T_L^o \mapsto \R \text{ s.t. } \forall \tau \in \T_L, R_L(\tau) = f_L(\tau^o) \text{ and} \\
&\forall 0 \leq i \leq L-1, \tau^o_i \in \T^o_i, z_{i+1} \in \Z, P^b(z_{i+1} | \tau^o_i) = P^e(z_{i+1} | \tau^o_i).
\end{align*}
\vspace{-0.5cm}
\label{assumption: sufficient condition for IS}
\end{assumption}
In other words, this assumption states that the observed trajectory at time $L$ is a sufficient statistic of the true reward, and $\tau_i^o$ is a sufficient statistic of the state $u_i$. Under Assumption~\ref{assumption: sufficient condition for IS} we can construct an IS variant as follows. Given a trajectory $\tau \in \T_L$ we have that
\begin{align*}
    v(\pi_e)
    &= \sum_{\tau \in \T_L} R(\tau)P^e(\tau)  \\
    &= \sum_{u_0, \hdots, u_L} \sum_{\tau^o \in \T^o_L} f_L(\tau^o)P^e(\tau^o, u_0, \hdots, u_L)  \\
    &= \sum_{\tau^o \in \T_L^o} f_L(\tau^o)P^e(\tau^o).
\end{align*}
We can express $P^e(\tau^o)$ using $P^b(\tau^o)$ for any $\tau^o \in \T^o_L$ as
\begin{align*}
    &P^e(\tau^o) = P^e(z_0, a_0, \hdots ,z_t, a_t)\\
    &= 
    P^b(z_0) \prod_{i=0}^{L-1} P^b(z_{i+1} | \tau^o_i)\prod_{i=0}^{L}\pie{i}{a_i | h^o_i} \\
    &= 
    P^b(\tau^o) 
    \prod_{i=0}^{L}
    \frac{\pie{i}{a_i | h^o_i}}{P^b(a_{i} | h^o_i)}.
\end{align*}
We can thus evaluate $v(\pi_e)$ using Equation~\eqref{eq: importance sampling} and importance weights $w_i = \frac{\pie{i}{a_i | h^o_i}}{P^b(a_{i} | h^o_i)}$. While this estimate seems simple and intuitive, as we demonstrate next, arbitrarily large evaluation errors can occur using the above IS weights when Assumption~\ref{assumption: sufficient condition for IS} does not hold.

\subsection{Importance Sampling Error in POMDPs}

In general POMDPs, if Assumption~\ref{assumption: sufficient condition for IS} does not hold, using the importance weights $w_i = \frac{\pie{i}{a_i | h^o_i}}{P^b(a_{i} | h^o_i)}$ can result in large errors in the evaluation of $\pi_e$. This error is demonstrated by an example given in Figure~\ref{fig: pomdp_example}. In this example, we assume an initial state distribution $\nu_0 = (\frac{1}{2}, \frac{1}{2})$. Given a behavior policy
${
    \pi_b(a_i | u_k^{(j)}) =
    \begin{cases}
    \frac{2}{3} &  i \oplus j = 0\\
    \frac{1}{3} & i \oplus j = 1 \\
    \end{cases}
}
$
for all $k$, and a stationary evaluation policy
${
    \pi_e(a_i | z_j) =
    \begin{cases}
    \frac{2}{3} & i \oplus j = 0 \\
    \frac{1}{3} & i \oplus j = 1
    \end{cases}
}$,
we have that ${v(\pi_b) \approx 0.72\alpha + 0.26\gamma}$ and ${v(\pi_e) \approx -0.01\alpha + 0.14\gamma}$. However, using ${w_i = \frac{\pi_e(a_i | z_i)}{P^b(a_i | h_i^o)}}$, IS evaluation yields ${\text{IS}(\pi_e, w) \approx 0.62\alpha + 0.34\gamma}$. This amounts to an error of $0.63\alpha + 0.2\gamma$ between the true policy evaluation and its importance sampling evaluation, which can be arbitrarily large. As an example, for $\alpha = 0.8\gamma$ we get that $v(\pi_b) = \text{IS}(\pi_e, w)$. Particularly, $\text{IS}(\pi_e, w) > v(\pi_b)$ for $\alpha > 0.8$. This is contrary to the fact that $v(\pi_b) > v(\pi_e)$ for all values of $\alpha > 0$.

Unlike the invertibility assumptions of Theorems~\ref{thm: main result} and~\ref{thm: main result2}, Assumption~\ref{assumption: sufficient condition for IS} is a strong assumption, that is unlikely to hold for almost any POMDP. At this point it is unclear if a more lenient assumption can assure an unbiased IS estimator for POMDPs, and we leave this as an open research question for future work. In the next section, we experiment with the results of Theorem~\ref{thm: main result2} and the IS variant constructed in this section on a finite-sample dataset generated by a synthetic medical environment.

\section{Experiments}
\label{sec: experiments}

The medical domain is known to be prone to many unobserved factors affecting both actions and rewards \citep{gottesman2018evaluating}. As such, OPE in these domains requires adaptation to partially observable settings. In our experiments we construct a synthetic medical environment using a Decoupled POMDP model, as described next. 

We denote $\sigma(x) = \frac{1}{1 + e^{-x}}$. The environment consists of a patient's (observed) medical state $z$. Here $z$ changes according to the taken action by $P(z' | z, a) \propto \sigma(c_{z,z',a}^\top \phi_z(z))$, where $c_{z, z', a}$ are given weights, and $\phi_z(z)$ are the state features. We assume there exist unobserved variables $u = (u_{\text{mood}}, u_{\text{look}})$ relating to the doctors current mood, and how ``good'' the patient looks to her, respectively. In other words, we assume the doctor has an inner subjective ranking of the patient's look. Observation of the doctor's mood and inner ranking are modeled by $P(o | u) \propto \sigma(c_{u, o}^\top \phi_{u}(u))$, where $c_{u,o}$ are given weights, and $\phi_{u}(u)$ are the unobserved state features. Such observations could be based on the doctor's textual notes. Such notes, when processed through state of the art sentiment analysis algorithms \citep{qian2016linguistically} act as proxies to the doctor's mood and subjective assessment of the patient.

We model the doctor's mood changes according to the doctor's current mood, patient's look, and taken action $P(u'_{\text{mood}} | u, a) \propto \sigma(c_{u, a, u'_\text{mood}}^\top \phi_{\text{mood}}(u))$. Finally, the doctor's inner ranking of the patient's look is dependent on the patient's state $z$ and look $u_{\text{look}}$ by ${P(u'_{\text{look}} | z',u_{\text{look}}) \propto \sigma(c_{z',u_{\text{look}}, u'_{\text{look}}}^\top (\phi_{\text{look}}(u_\text{look}), \phi_{z}(z)))}$.

We assume data generated by a confounded reward function and behavior policy
\begin{align*}
&r(u,z,a) \propto \sigma((1-\alpha)(c^r_{z, a})^\top \phi_z(z) + \alpha (c^r_{u, a})^\top \phi_u(u)), \\
&\pi_b(a | u, z) \propto \sigma((1-\alpha)(c^b_{z, a})^\top \phi_z(z) + \alpha (c^b_{u, a})^\top \phi_u(u)).
\end{align*}
Here ${\alpha \in [0, 1]}$ is a parameter which controls the ``level of confoundedness''. In other words, $\alpha$ is a measure of the intensity in which $\pi_b$ and $r$ depend on the unobserved state $u$, with $\alpha=0$ corresponding to no unobserved confounding. 

The spaces $\mathcal{Z}$, $\U$, and $\mathcal{O}$ were composed of two binary features each. We run the experiment in three environments, corresponding to different settings of the vectors $c$ meant to illustrate different behaviors of our methods. Ten million trajectories were sampled from the policy $\pi_b$ over a horizon \mbox{of 4} time steps for each environment. Figure~\ref{fig: Experiments}  depicts the cumulative reward of $\pi_{e}$, $\pi_b$, and their corresponding estimates according to Theorem~\ref{thm: main result2} and the IS weights $w_i^k = \frac{\pie{i}{a_i | h^o_i}}{P^b(a_{i} | h^o_i)}$, for different values of $\alpha$.

Environment 1 illustrates a typical result from the generative process above, where the vectors $c$ were sampled from a Gaussian. It is clear that IS estimates increase in bias with $\alpha$, whereas Theorem~\ref{thm: main result2}'s estimate remains unchanged. Moreover, for values of $\alpha > 0.3$, IS suggests that $\pi_e$ is superior to $\pi_b$. This implies that potentially arbitrarily bad policies could be learned by an off-policy RL algorithm. Environments~2 and 3 are atypical, and were found through deliberate search, to illustrate situations in which our estimation procedure does not clearly outperform IS. In Environment~2, for large values of $\alpha$, variance increases, due to near non-invertibility of the conditional probability matrices. In Environment~3, despite confounding, IS remains unbiased. 

\section{Related Work}

\noindent \textbf{POMDPs:} Uncertainty is a key feature in real world applications. POMDPs provide a principled general framework for planning under uncertainty \citep{spaan2012partially,williams2007partially}. POMDPs model aspects such as the stochastic effects of actions, incomplete information and noisy observations over the environment. POMDPs are known to be notoriously hard to approximate \citep{madani1999undecidability,papadimitriou1987complexity}. Their intractability is mainly due to the ``curse of dimensionality'' for which complexity grows exponentially with the cardinality of the unobserved state space. As this work considers offline evaluation under uncertainty, the unobserved state is treated as a confounding element for policy evaluation.

To the best of our knowledge our work provides the first OPE results for POMDPs. Nevertheless, there has been considerable work on learning in POMDPs, where, similar to our setting, the agent does not gain access to the POMDP model. \citet{even2005reinforcement} implement an approximate reset strategy based on a random walk of the agent, effectively resetting the agent's belief state. \citet{hausknecht2015deep} tackle the learning problem by adding recurrency to Deep Q-Learning, allowing the Q-network to estimate the underlying system state, narrowing the gap between $Q_\theta(o, a)$ and $Q_\theta(u, a)$. Nevertheless, work on learning in POMDPs greatly differs from OPE for POMDPs, as the former offer online solutions based on interactive environments, whereas the latter uses batch data generated by an \textit{unknown} and \textit{unregulated} behavior policy. 

\noindent \textbf{Off-Policy Evaluation (OPE):} Contemporary OPE methods can be partitioned into three classes: (1) direct methods (DM) \citep{precup2000eligibility,munos2016safe,le2019batch,jiang2015doubly}, which aim to fit the value of a policy directly, (2) inverse propensity score (IPS) methods, also known as importance sampling (IS) methods \citep{liu2018breaking,dudik2011doubly,jiang2015doubly}, and (3) Doubly-Robust methods (DRM) \citep{jiang2015doubly,thomas2016data,kallus2019double}, which combine IS methods with an estimate of the action-value function, typically supplied by a DM. These algorithms were designed for bandits, and later generalized to RL. Nevertheless, existing methods assume full observability of the underlying state. They become dubious when part of the data generating process is unobserved or unknown.

\noindent \textbf{Causal Inference:} A major focus of work in causal inference is how to estimate, in an offline model, the effects of actions without fully observing the covariates which lead to the action \citep{Pearl:2009:CMR:1642718,spirtes2000causation}. Much of the work in this field focuses on static settings, with some more recent work also tackling the bandit setting \citep{bareinboim2015bandits,forney2017counterfactual,ramoly2017causal,sen2016contextual}. Sufficient ``sequential ignorability'' conditions (no hidden confounding) and methods for OPE of causal effects under dynamic policies are given by \citep{murphy2001marginal,hernan2006comparison,hernan2019causality}.


Recently, there has been growing interest in handling unobserved confounders in the context of MDPs. \citet{zhang2016markov} consider a class of counterfactual policies that incorporate a notion they call ``intuition'', by using observed actions as input to an RL agent. Their confounding model is a special case of our proposed Decoupled POMDP model in which confounding factors are independent of each other. \citet{lu2018deconfounding} propose a latent variable model for ``deconfounding reinforcement learning''. They extend the work of \citet{louizos2017causal} by positing a deep latent variable model with a single unobserved confounder that governs a trajectory,  deriving a variational lower bound on the likelihood and training a model with variational inference. Their causal model does not take into account dynamics of unobserved confounders. \citet{oberst2019counterfactual} also look at off-policy evaluation in POMDPs, though unlike this work they assume that the unobserved state does \emph{not} directly affect the observed behavior-policy actions. Their work focuses on counterfactuals: what would have happened in a specific trajectory under a different policy, had all the other variables, including the random noise variables, been the same. This is a difficult task, lying on the third rung of Pearl's causal hierarchy, which we restate in the supplementary material. \citep{pearl2018theoretical}. Our task is on the second rung of the hierarchy: we wish to know the effect of intervening on the world and acting differently, using a policy $\pi_e$. \citet{oberst2019counterfactual} therefore requires more stringent assumptions than ours on the structure of the causal model, namely an extension of outcome monotonicity.

Our work specifically extends the work of \citet{miao2018identifying}, which rests on the measurement of two independent proxy variables in a bandit setting. Our results generalizes their identification strategy through the independence structure that is inherent to POMDPs and Decoupled POMDPs, where past and future are independent conditioned on the unobserved confounder at time $t$.


\section{Conclusion and Future Work}

Off-policy evaluation of sequential decisions is a fundamentally hard problem, especially when it is done under partial observability of the state. Unknown states produce bias through factors that affect both observed actions and rewards. This paper offers one approach to tackle this problem in POMDPs and Decoupled POMDPs.

While the expressiveness of POMDPs is useful in many cases, it also comes with a substantial increase in complexity. Yet, one may not necessarily require the complete general framework to model complex problems. This paper takes a step towards an alternative model, Decoupled POMDP, for which unobserved factors are isolated, reducing OPE complexity, while maintaining the same expressive power as POMDPs. We note that Decoupled POMDPs may also benefit general purpose RL algorithms in partially observable environments.

In this work we experimented with a tabular environment. As future work, one may scale up to practical domains using latent space embeddings of the generalized weight matrices, as well as sophisticated sampling techniques that may reduce variance in approximation. 

\fontsize{9.2pt}{10.5pt} \selectfont
\bibliographystyle{aaai}
\bibliography{bibfile}

\begin{thebibliography}{}

\bibitem[\protect\citeauthoryear{Bareinboim, Forney, and
  Pearl}{2015}]{bareinboim2015bandits}
Bareinboim, E.; Forney, A.; and Pearl, J.
\newblock 2015.
\newblock Bandits with unobserved confounders: A causal approach.
\newblock In {\em Advances in Neural Information Processing Systems},
  1342--1350.

\bibitem[\protect\citeauthoryear{Bordenave, Caputo, and
  Chafa{\"\i}}{2012}]{bordenave2012circular}
Bordenave, C.; Caputo, P.; and Chafa{\"\i}, D.
\newblock 2012.
\newblock Circular law theorem for random {M}arkov matrices.
\newblock {\em Probability Theory and Related Fields} 152(3-4):751--779.

\bibitem[\protect\citeauthoryear{Dann, Neumann, and
  Peters}{2014}]{dann2014policy}
Dann, C.; Neumann, G.; and Peters, J.
\newblock 2014.
\newblock Policy evaluation with temporal differences: A survey and comparison.
\newblock {\em The Journal of Machine Learning Research} 15(1):809--883.

\bibitem[\protect\citeauthoryear{Dud{\'\i}k, Langford, and
  Li}{2011}]{dudik2011doubly}
Dud{\'\i}k, M.; Langford, J.; and Li, L.
\newblock 2011.
\newblock Doubly robust policy evaluation and learning.
\newblock {\em arXiv preprint arXiv:1103.4601}.

\bibitem[\protect\citeauthoryear{Even-Dar, Kakade, and
  Mansour}{2005}]{even2005reinforcement}
Even-Dar, E.; Kakade, S.~M.; and Mansour, Y.
\newblock 2005.
\newblock Reinforcement learning in {POMDP}s without resets.
\newblock In {\em Proceedings of the 19th international joint conference on
  Artificial intelligence},  690--695.
\newblock Morgan Kaufmann Publishers Inc.

\bibitem[\protect\citeauthoryear{Forney, Pearl, and
  Bareinboim}{2017}]{forney2017counterfactual}
Forney, A.; Pearl, J.; and Bareinboim, E.
\newblock 2017.
\newblock Counterfactual data-fusion for online reinforcement learners.
\newblock In {\em Proceedings of the 34th International Conference on Machine
  Learning-Volume 70},  1156--1164.
\newblock JMLR. org.

\bibitem[\protect\citeauthoryear{Gottesman \bgroup et al\mbox.\egroup
  }{2018}]{gottesman2018evaluating}
Gottesman, O.; Johansson, F.; Meier, J.; Dent, J.; Lee, D.; Srinivasan, S.;
  Zhang, L.; Ding, Y.; Wihl, D.; Peng, X.; et~al.
\newblock 2018.
\newblock Evaluating reinforcement learning algorithms in observational health
  settings.
\newblock {\em arXiv preprint arXiv:1805.12298}.

\bibitem[\protect\citeauthoryear{Hausknecht and
  Stone}{2015}]{hausknecht2015deep}
Hausknecht, M., and Stone, P.
\newblock 2015.
\newblock Deep recurrent q-learning for partially observable {MDP}s.
\newblock In {\em 2015 AAAI Fall Symposium Series}.

\bibitem[\protect\citeauthoryear{Hern{\'a}n and
  Robins}{2019}]{hernan2019causality}
Hern{\'a}n, M.~A., and Robins, J.~M.
\newblock 2019.
\newblock {\em Causal Inference}.
\newblock Chapman \& Hall/CRC.

\bibitem[\protect\citeauthoryear{Hern{\'a}n \bgroup et al\mbox.\egroup
  }{2006}]{hernan2006comparison}
Hern{\'a}n, M.~A.; Lanoy, E.; Costagliola, D.; and Robins, J.~M.
\newblock 2006.
\newblock Comparison of dynamic treatment regimes via inverse probability
  weighting.
\newblock {\em Basic \& clinical pharmacology \& toxicology} 98(3):237--242.

\bibitem[\protect\citeauthoryear{Jiang and Li}{2015}]{jiang2015doubly}
Jiang, N., and Li, L.
\newblock 2015.
\newblock Doubly robust off-policy value evaluation for reinforcement learning.
\newblock {\em arXiv preprint arXiv:1511.03722}.

\bibitem[\protect\citeauthoryear{Kallus and Uehara}{2019}]{kallus2019double}
Kallus, N., and Uehara, M.
\newblock 2019.
\newblock Double reinforcement learning for efficient off-policy evaluation in
  {M}arkov decision processes.
\newblock {\em arXiv preprint arXiv:1908.08526}.

\bibitem[\protect\citeauthoryear{Kuroki and
  Pearl}{2014}]{kuroki2014measurement}
Kuroki, M., and Pearl, J.
\newblock 2014.
\newblock Measurement bias and effect restoration in causal inference.
\newblock {\em Biometrika} 101(2):423--437.

\bibitem[\protect\citeauthoryear{Le, Voloshin, and Yue}{2019}]{le2019batch}
Le, H.~M.; Voloshin, C.; and Yue, Y.
\newblock 2019.
\newblock Batch policy learning under constraints.
\newblock {\em arXiv preprint arXiv:1903.08738}.

\bibitem[\protect\citeauthoryear{Liu \bgroup et al\mbox.\egroup
  }{2018}]{liu2018breaking}
Liu, Q.; Li, L.; Tang, Z.; and Zhou, D.
\newblock 2018.
\newblock Breaking the curse of horizon: Infinite-horizon off-policy
  estimation.
\newblock In {\em Advances in Neural Information Processing Systems},
  5356--5366.

\bibitem[\protect\citeauthoryear{Louizos \bgroup et al\mbox.\egroup
  }{2017}]{louizos2017causal}
Louizos, C.; Shalit, U.; Mooij, J.~M.; Sontag, D.; Zemel, R.; and Welling, M.
\newblock 2017.
\newblock Causal effect inference with deep latent-variable models.
\newblock In {\em Advances in Neural Information Processing Systems},
  6446--6456.

\bibitem[\protect\citeauthoryear{Lu, Sch{\"o}lkopf, and
  Hern{\'a}ndez-Lobato}{2018}]{lu2018deconfounding}
Lu, C.; Sch{\"o}lkopf, B.; and Hern{\'a}ndez-Lobato, J.~M.
\newblock 2018.
\newblock Deconfounding reinforcement learning in observational settings.
\newblock {\em arXiv preprint arXiv:1812.10576}.

\bibitem[\protect\citeauthoryear{Madani, Hanks, and
  Condon}{1999}]{madani1999undecidability}
Madani, O.; Hanks, S.; and Condon, A.
\newblock 1999.
\newblock On the undecidability of probabilistic planning and infinite-horizon
  partially observable {M}arkov decision problems.
\newblock In {\em AAAI/IAAI},  541--548.

\bibitem[\protect\citeauthoryear{Miao, Geng, and
  Tchetgen~Tchetgen}{2018}]{miao2018identifying}
Miao, W.; Geng, Z.; and Tchetgen~Tchetgen, E.~J.
\newblock 2018.
\newblock Identifying causal effects with proxy variables of an unmeasured
  confounder.
\newblock {\em Biometrika} 105(4):987--993.

\bibitem[\protect\citeauthoryear{Munos \bgroup et al\mbox.\egroup
  }{2016}]{munos2016safe}
Munos, R.; Stepleton, T.; Harutyunyan, A.; and Bellemare, M.
\newblock 2016.
\newblock Safe and efficient off-policy reinforcement learning.
\newblock In {\em Advances in Neural Information Processing Systems},
  1054--1062.

\bibitem[\protect\citeauthoryear{Murphy \bgroup et al\mbox.\egroup
  }{2001}]{murphy2001marginal}
Murphy, S.~A.; van~der Laan, M.~J.; Robins, J.~M.; and Group, C. P. P.~R.
\newblock 2001.
\newblock Marginal mean models for dynamic regimes.
\newblock {\em Journal of the American Statistical Association}
  96(456):1410--1423.

\bibitem[\protect\citeauthoryear{Oberst and
  Sontag}{2019}]{oberst2019counterfactual}
Oberst, M., and Sontag, D.
\newblock 2019.
\newblock Counterfactual off-policy evaluation with {G}umbel-max structural
  causal models.
\newblock In {\em International Conference on Machine Learning},  4881--4890.

\bibitem[\protect\citeauthoryear{Papadimitriou and
  Tsitsiklis}{1987}]{papadimitriou1987complexity}
Papadimitriou, C.~H., and Tsitsiklis, J.~N.
\newblock 1987.
\newblock The complexity of {M}arkov decision processes.
\newblock {\em Mathematics of operations research} 12(3):441--450.

\bibitem[\protect\citeauthoryear{Pearl}{2009}]{Pearl:2009:CMR:1642718}
Pearl, J.
\newblock 2009.
\newblock {\em Causality: Models, Reasoning and Inference}.
\newblock New York, NY, USA: Cambridge University Press, 2nd edition.

\bibitem[\protect\citeauthoryear{Pearl}{2018}]{pearl2018theoretical}
Pearl, J.
\newblock 2018.
\newblock Theoretical impediments to machine learning with seven sparks from
  the causal revolution.
\newblock {\em arXiv preprint arXiv:1801.04016}.

\bibitem[\protect\citeauthoryear{Peters, Janzing, and
  Sch{\"o}lkopf}{2017}]{peters2017elements}
Peters, J.; Janzing, D.; and Sch{\"o}lkopf, B.
\newblock 2017.
\newblock {\em Elements of causal inference: foundations and learning
  algorithms}.
\newblock MIT press.

\bibitem[\protect\citeauthoryear{Precup}{2000}]{precup2000eligibility}
Precup, D.
\newblock 2000.
\newblock Eligibility traces for off-policy policy evaluation.
\newblock {\em Computer Science Department Faculty Publication Series} ~80.

\bibitem[\protect\citeauthoryear{Puterman}{1994}]{puterman1994markov}
Puterman, M.~L.
\newblock 1994.
\newblock {\em {M}arkov decision processes: discrete stochastic dynamic
  programming}.
\newblock John Wiley \& Sons.

\bibitem[\protect\citeauthoryear{Qian \bgroup et al\mbox.\egroup
  }{2016}]{qian2016linguistically}
Qian, Q.; Huang, M.; Lei, J.; and Zhu, X.
\newblock 2016.
\newblock Linguistically regularized {LSTM}s for sentiment classification.
\newblock {\em arXiv preprint arXiv:1611.03949}.

\bibitem[\protect\citeauthoryear{Ramoly, Bouzeghoub, and
  Finance}{2017}]{ramoly2017causal}
Ramoly, N.; Bouzeghoub, A.; and Finance, B.
\newblock 2017.
\newblock A causal multi-armed bandit approach for domestic robots’ failure
  avoidance.
\newblock In {\em International Conference on Neural Information Processing},
  90--99.
\newblock Springer.

\bibitem[\protect\citeauthoryear{Romanovskii}{1965}]{romanovskii1965existence}
Romanovskii, I.
\newblock 1965.
\newblock Existence of an optimal stationary policy in a {M}arkov decision
  process.
\newblock {\em Theory of Probability \& Its Applications} 10(1):120--122.

\bibitem[\protect\citeauthoryear{Sen \bgroup et al\mbox.\egroup
  }{2016}]{sen2016contextual}
Sen, R.; Shanmugam, K.; Kocaoglu, M.; Dimakis, A.~G.; and Shakkottai, S.
\newblock 2016.
\newblock Contextual bandits with latent confounders: An {NMF} approach.
\newblock {\em arXiv preprint arXiv:1606.00119}.

\bibitem[\protect\citeauthoryear{Spaan}{2012}]{spaan2012partially}
Spaan, M.~T.
\newblock 2012.
\newblock Partially observable {M}arkov decision processes.
\newblock In {\em Reinforcement Learning}. Springer.
\newblock  387--414.

\bibitem[\protect\citeauthoryear{Spirtes \bgroup et al\mbox.\egroup
  }{2000}]{spirtes2000causation}
Spirtes, P.; Glymour, C.~N.; Scheines, R.; Heckerman, D.; Meek, C.; Cooper, G.;
  and Richardson, T.
\newblock 2000.
\newblock {\em Causation, prediction, and search}.
\newblock MIT press.

\bibitem[\protect\citeauthoryear{Sutton and
  Barto}{1998}]{sutton1998reinforcement}
Sutton, R.~S., and Barto, A.~G.
\newblock 1998.
\newblock {\em Reinforcement learning: An introduction}, volume~1.
\newblock MIT press Cambridge.

\bibitem[\protect\citeauthoryear{Thomas and Brunskill}{2016}]{thomas2016data}
Thomas, P., and Brunskill, E.
\newblock 2016.
\newblock Data-efficient off-policy policy evaluation for reinforcement
  learning.
\newblock In {\em International Conference on Machine Learning},  2139--2148.

\bibitem[\protect\citeauthoryear{Williams and
  Young}{2007}]{williams2007partially}
Williams, J.~D., and Young, S.
\newblock 2007.
\newblock Partially observable {M}arkov decision processes for spoken dialog
  systems.
\newblock {\em Computer Speech \& Language} 21(2):393--422.

\bibitem[\protect\citeauthoryear{Zhang and Bareinboim}{2016}]{zhang2016markov}
Zhang, J., and Bareinboim, E.
\newblock 2016.
\newblock {M}arkov decision processes with unobserved confounders: A causal
  approach.
\newblock Technical report, Technical Report R-23, Purdue AI Lab.

\end{thebibliography}

\section{Acknowledgments}
We thank Michael Oberst, Moshe Tennenholtz, and the anonymous reviewers for their fruitful comments that greatly improved this paper. Research was conducted under ISF grant number 1380/16.

\onecolumn
\fontsize{10.0pt}{11.0pt} \selectfont

\appendix
\section{Main Results}
\label{appendixA}

\subsection{Local Evaluation}

POMDPs are known to be notoriously hard to approximate \citep{madani1999undecidability,papadimitriou1987complexity}. Their intractability is mainly due to the ``curse of dimensionality'' for which complexity grows exponentially with the cardinality of the state space $\abs{\U}$. We thus first tackle a more modest problem, that is local in time, for which complexity does not scale with history length $L$.

Suppose we are tasked with the limited setting of evaluating $\pi_e$ only for a specific point in time, while behaving the same as the behavior policy $\pi_b$ at all other times. For illustrative purposes, we will restrict ourselves to an evaluation policy which only depends on the current observation $z$ (also known as a memoryless policy). This restriction will be removed in our final results and is only assumed for clarity. More specifically, given ${\pi_b^{(t)}: \U \times \A \mapsto [0, 1]}$ and ${\pi_e^{(t)} : \mathcal{Z} \times \A \mapsto [0, 1]}$ we define the time-dependent policy $\pi_L$ as
\begin{equation*}
    \pi_L^{(t)} =
    \begin{cases}
    \pi_e^{(t)} & t = L \\
    \pi_b^{(t)} & o.w.
    \end{cases}
\end{equation*}
Our goal is to evaluate $v_L(\pi_L)$ using the measure $P^{b}$ over observable histories. We generalize the bandit result presented in \citet{miao2018identifying}, in which two independent proxy variables of a hidden variable satisfying a certain rank condition are sufficient in order to nonparametrically evaluate $\pi_L$, even if the observation distribution $O(z,u)$ is unknown. We first note that for all times $t < L$, $P^{\pi_L}(r_t) = P^b(r_t)$, by definition. It is thus sufficient to evaluate $P^{\pi_L}(r_L)$. 

\noindent We have that
\begin{align}
    &P^{\pi_L}(r_L) \\
    &=
    \sum_{z_L, a_L, u_L} 
    P^{\pi_L}(r_L | z_L, a_L, u_L)
    P^{\pi_L}(z_L, a_L, u_L) \\
    &=
    \sum_{z_L, a_L, u_L}
    P^b(r_L | a_L, u_L)
    \pie{L}{a_L | z_L}
    P^b(z_L | u_L)
    P^b(u_L) \\
    &=
    \sum_{z_L, a_L, u_L} 
    \pie{L}{a_L | z_L}
    P^b(r_L, z_L | a_L, u_L)
    P^b(u_L),
    \label{eq: local effect}
\end{align}
where the last transition is due to the fact that $\pi_b$ does not depend on $z_L$ given $u_L$. Equation~\eqref{eq: local effect} can be rewritten in vector form as
\begin{equation*}
    P^{\pi_L}(r_L)
    =
    \sum_{z_L, a_L} 
    \pie{L}{a_L | z_L}
    P^b(r_L, z_L | a_L, U_L)
    P^b(U_L).
\end{equation*}
Next, note that $z_{L-1}$ is independent of $r_L$ and $z_L$ given $u_L$ and $a_L$. Therefore,
\begin{align*}
&P^b(r_L, z_L | a_L, U_L)P^b(U_L| a_L, z_{L-1}) \\
&= 
P^b(r_L, z_L | a_L, z_{L-1}, U_L)P^b(U_L| a_L, z_{L-1})  \\
&=
P^b(r_L, z_L | a_L, z_{L-1}).
\end{align*}
As the above is true for all values of $z_{L-1}$ and assuming the matrix $P^b(U_L | a_L, Z_{L-1})$ is invertible for all $a_L \in \A$, then we can write (in vector form)
\begin{equation}
P^b(r_L, z_L | a_L, U_L) 
=
P^b(r_L, z_L | a_L, Z_{L-1})
P^b(U_L| a_L, Z_{L-1})^{-1}.
\label{eq: local part1}
\end{equation}
Similarly, we have that
\begin{align*}
&P^b(z_L| a_L, U_L)
P^b(U_L | a_L, z_{L-1}) \\
&=
P^b(z_L| a_L, z_{L-1}, U_L)
P^b(U_L | a_L, z_{L-1}) \\
&=
P^b(z_L | a_L, z_{L-1}).
\end{align*}
As the above is true for all values of $z_{L-1}, z_L$ and assuming the matrix $P^b(Z_L | a_L, U_L)$ is invertible for all $a_L \in \A$, we can write
\begin{equation}
P^b(U_L | a_L, Z_{L-1})
=
P^b(Z_L| a_L, U_L)^{-1}P^b(Z_L | a_L, Z_{L-1}).
\label{eq: local part2}
\end{equation}
Combining Equations~\eqref{eq: local part1} and \eqref{eq: local part2} yields
\begin{align*}
    &P^b(r_L, z_L | a_L, U_L) \\
    &=
   P^b(r_L, z_L | a_L, Z_{L-1})
   P^b(Z_L | a_L, Z_{L-1})^{-1}
   P^b(Z_L| a_L, U_L).
\end{align*}
Finally, note that $z_L$ is independent of $a_L$ given $u_L$. Hence,
\begin{equation*}
    P^b(Z_L | a_L, U_L)P^b(U_L) = P^b(Z_L | U_L)P^b(U_L) = P^b(Z_L).
\end{equation*}
Lastly, note that if $P^b(Z_L| a_L, Z_{L-1})$ is invertible, then so is $P^b(Z_L | a_L, U_L)$. Denote the generalized weight vector $W_L(a_L) = P^b(Z_L | a_L, Z_{L-1})^{-1}P^b(Z_L)$, then we have thus proved the following proposition.

\begin{proposition}
\label{prop: local}
Suppose $P^b(Z_L | a_L, Z_{L-1}), P^b(U_L | a_L, Z_{L-1})$ are invertible for all $a_L \in \A$, then
\begin{align*}
    P^{\pi_L}(r_L) 
    =
    \sum_{z_L, a_L} 
    \pie{L}{a_L | z_L}
   P^b(r_L, z_L | a_L, Z_{L-1})
   W_L(a_L).
\end{align*}
\end{proposition}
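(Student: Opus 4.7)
The plan is to use $z_{L-1}$ and $z_L$ as two conditionally independent noisy views of the unobserved state $u_L$, in the spirit of the proxy-variable identification strategy of Miao et al., and then algebraically eliminate the unobservable conditional distributions by inverting two stochastic matrices. First I would write $v_L(\pi_L)$ as the expected reward at time $L$ (since $\pi_L$ agrees with $\pi_b$ at all other times and hence induces the same per-step reward distributions there), and reduce the problem to evaluating $P^{\pi_L}(r_L)$.

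Next I would expand $P^{\pi_L}(r_L)$ by total probability over the triple $(z_L, a_L, u_L)$. Using the POMDP causal structure — the reward depends only on $(u_L, a_L)$, the behavior policy at time $L$ depends only on $u_L$ and is replaced at that step by $\pi_e^{(L)}(a_L \mid z_L)$, and $z_L$ is generated from $u_L$ — I would factor
\[
P^{\pi_L}(r_L) = \sum_{z_L, a_L, u_L} \pi_e^{(L)}(a_L \mid z_L)\, P^b(r_L, z_L \mid a_L, u_L)\, P^b(u_L),
\]
which in the vector notation of the paper becomes $\sum_{z_L, a_L}\pi_e^{(L)}(a_L \mid z_L) P^b(r_L, z_L \mid a_L, U_L) P^b(U_L)$. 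The unobserved marginal and the conditional on $U_L$ are the only unobservable quantities, and the job is to express their product in observable terms.

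The key algebraic step is to introduce $z_{L-1}$ as a proxy. Because $z_{L-1}$ is independent of $(r_L, z_L)$ given $(u_L, a_L)$ in the POMDP graph, multiplying by $P^b(U_L \mid a_L, z_{L-1})$ and summing over $U_L$ collapses to $P^b(r_L, z_L \mid a_L, z_{L-1})$; reading this as a matrix identity for every value of $z_{L-1}$ and inverting $P^b(U_L \mid a_L, Z_{L-1})$ yields
\[
P^b(r_L, z_L \mid a_L, U_L) = P^b(r_L, z_L \mid a_L, Z_{L-1})\, P^b(U_L \mid a_L, Z_{L-1})^{-1}.
\]
I would then apply exactly the same manoeuvre with $z_L$ in place of the pair $(r_L, z_L)$ to get $P^b(U_L \mid a_L, Z_{L-1}) = P^b(Z_L \mid a_L, U_L)^{-1} P^b(Z_L \mid a_L, Z_{L-1})$, and finally use $z_L \perp a_L \mid u_L$ to collapse $P^b(Z_L \mid a_L, U_L) P^b(U_L) = P^b(Z_L)$. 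Substituting back, the unobservable factors cancel and one is left with the stated weight $W_L(a_L) = P^b(Z_L \mid a_L, Z_{L-1})^{-1} P^b(Z_L)$.

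The main obstacle is not the algebra but bookkeeping the conditional-independence assertions in the causal graph and making sure the matrix inversions are licit. In particular I would need to check that invertibility of $P^b(Z_L \mid a_L, Z_{L-1})$, together with the assumed invertibility of $P^b(U_L \mid a_L, Z_{L-1})$, forces $P^b(Z_L \mid a_L, U_L)$ to be invertible as well (otherwise the second substitution would not be valid). Everything else — the expansion by total probability and the use of $z_L \perp a_L \mid u_L$ — is routine once the two proxy-style identities are set up correctly.
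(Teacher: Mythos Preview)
Your proposal is correct and follows essentially the same route as the paper: expand $P^{\pi_L}(r_L)$ over $(z_L,a_L,u_L)$, use the POMDP independences to write it as $\sum_{z_L,a_L}\pie{L}{a_L|z_L}P^b(r_L,z_L|a_L,U_L)P^b(U_L)$, then apply the two proxy identities via $z_{L-1}$ (inverting $P^b(U_L|a_L,Z_{L-1})$ and then $P^b(Z_L|a_L,U_L)$) and collapse $P^b(Z_L|a_L,U_L)P^b(U_L)=P^b(Z_L)$ using $z_L\perp a_L\mid u_L$. Your remark that invertibility of $P^b(Z_L|a_L,Z_{L-1})$ together with that of $P^b(U_L|a_L,Z_{L-1})$ forces invertibility of $P^b(Z_L|a_L,U_L)$ is exactly the observation the paper makes to justify the second inversion.
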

Proposition~\ref{prop: local} lets us evaluate $P^{\pi_L}(r_L)$ without access to the unknown state $u_L$. It uses past and future observations $z_L$ and $z_{L-1}$ to create an unbiased evaluation of $P^{\pi_L}(r_L)$. Its main assumption is that the conditional distribution matrices $P^b(Z_L | a_L, Z_{L-1}), P^b(U_L | a_L, Z_{L-1})$ are invertible. In other words, it is assumed that enough information is transferred from states to observations between time steps. A trivial case in which this assumption does not hold is when $U_i$ are i.i.d. In such a scenario, $Z_{L-1}$ does not contain useful information in order to evaluate $r_L$, and an additional independent observation is needed. Nevertheless, this assumption can be greatly reduced under a decoupled POMDP model (see Section \ref{sec: Decoupled POMDP}).

At this point, we have all needed information to evaluate $v(\pi_L)$. The proof of Theorem~\ref{thm: main result} iteratively applies a similar dependence in order to evaluate $v(\pi_e)$ globally for all time steps and general history dependent evaluation policies.


\subsection{Proof of Theorem~\ref{thm: main result}}

We start by stating two auxiliary lemmas (their proof can be found in Appendix~B.

\begin{lemma}
    \begin{align*}
        P^e(r_t) 
        &=
        \sum_{\tau^o \in \T^o_t}
        \pth{\prod_{i=0}^t \pie{i}{a_i | h_i^o)}}
        P^b(r_t, z_t | a_t, U_t)
        \pth{
        \prod_{i=t-1}^{0}
        P^b(U_{i+1}, z_i | a_i, U_i)
        }
        P^b(U_0).
    \end{align*}
    \label{lemma: do_r}
\end{lemma}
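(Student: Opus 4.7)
The plan is to expand $P^e(r_t)$ as a sum over complete (observed and unobserved) trajectories, use the POMDP causal factorization (noting that environment kernels are shared between $P^b$ and $P^e$, only the action-selection differs), pull out the $\pi_e$ factors that depend only on observable histories, and finally recognize the resulting sum over hidden states as a chain of matrix-vector multiplications in the vector notation defined in the Preliminaries.

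Concretely, first I would write
\begin{equation*}
P^e(r_t) = \sum_{\tau^o \in \T_t^o} \sum_{u_0,\dots,u_t \in \U} P^e\!\pth{r_t, u_0, z_0, a_0, \dots, u_t, z_t, a_t}.
\end{equation*}
Next, using the POMDP graphical structure, the joint factorizes as
\begin{equation*}
P^b(u_0) \bpth{\prod_{i=0}^{t} P^b(z_i \mid u_i)\, \pi_e^{(i)}(a_i \mid h_i^o)} \bpth{\prod_{i=0}^{t-1} P^b(u_{i+1} \mid u_i, a_i)} P^b(r_t \mid u_t, a_t),
\end{equation*}
where all transition/observation/reward kernels match those under $P^b$ since they are properties of the environment, and only the policy factor is $\pi_e$. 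I would then pull the product $\prod_{i=0}^t \pi_e^{(i)}(a_i\mid h_i^o)$ out of the inner sum over $u_0,\dots,u_t$, since each $\pi_e$-factor depends only on the observable history $h_i^o$.

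For the remaining sum over hidden states, I would regroup factors using the conditional independencies implicit in the POMDP graph. In particular, because $z_i$ is independent of $a_i$ and $u_{i+1}$ given $u_i$, I can combine
\begin{equation*}
P^b(z_i \mid u_i) P^b(u_{i+1} \mid u_i, a_i) = P^b(u_{i+1}, z_i \mid u_i, a_i), \qquad 0 \le i \le t-1,
\end{equation*}
and similarly at the terminal step $P^b(z_t \mid u_t) P^b(r_t \mid u_t, a_t) = P^b(r_t, z_t \mid u_t, a_t)$. The remaining expression is a telescoping sum
\begin{equation*}
\sum_{u_0,\dots,u_t} P^b(r_t, z_t \mid u_t, a_t) \bpth{\prod_{i=t-1}^{0} P^b(u_{i+1}, z_i \mid u_i, a_i)} P^b(u_0),
\end{equation*}
which in the vector/matrix notation of the Preliminaries is precisely the stated product of the row vector $P^b(r_t, z_t \mid a_t, U_t)$, the matrices $P^b(U_{i+1}, z_i \mid a_i, U_i)$ in decreasing order of $i$, and the column vector $P^b(U_0)$.

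The main obstacle is purely bookkeeping: verifying that the conditional independencies needed for the regroupings hold under the POMDP causal graph, and that the indexing conventions of the vector notation (row vs.\ column, and the direction of the product from $i=t-1$ down to $i=0$) line up with the chain of sums over hidden states so that the matrix-vector products compose in the intended order. No nontrivial analytic ingredient is required beyond the graphical independence properties of the POMDP; the statement then follows by substitution.
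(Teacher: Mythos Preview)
Your proposal is correct and follows essentially the same route as the paper: expand $P^e(r_t)$ over full trajectories, factorize using the POMDP structure with environment kernels replaced by their $P^b$ counterparts and action factors by $\pi_e$, then use the conditional independencies $z_i \perp a_i \mid u_i$ and $z_i \perp u_{i+1} \mid (u_i,a_i)$ (and the analogous terminal-step identity) to regroup into the matrix-vector chain. The paper presents the factorization of $P^e(\tau)$ via a short backward induction rather than asserting the full joint factorization in one line, but the content and the regrouping step are identical.
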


\begin{lemma}
\label{lemma: multiplication}
    For all $0 \leq i \leq t-1$, let $x_i, y_i$ such that $x_i \perp (u_{i+1}, z_i) | a_i, u_i$ and $y_i \perp x_i | a_i, u_i$ and $y_i \perp (a_i, a_{i-1}, x_{i-1}, z_{i-1}) | u_i$. Assume that the matrices $P^b(U_i | a_i, X_i), P^b(Y_i | a_i, X_i), P^b(Y_i | a_i, U_i)$ are invertible. Then
    \begin{align*}
        P^b(Y_i| a_i, U_i)
        P^b(U_i, z_{i-1} | a_{i-1}, X_{i-1})
        =
        P^b(Y_i, z_{i-1} | a_{i-1}, X_{i-1}).
    \end{align*}
    Moreover,
    \begin{align*}
        &P^b(U_{i+1}, z_i | a_i, U_i)
        P^b(U_i, z_{i-1} | a_{i-1}, U_{i-1}) \\
        &=
        P^b(U_{i+1}, z_i | a_i, X_i)
        P^b(Y_i| a_i, X_i)^{-1}
        P^b(Y_i, z_{i-1} | a_{i-1}, X_{i-1})
        P^b(Y_{i-1}| a_{i-1}, X_{i-1})^{-1}
        P^b(Y_{i-1}| a_{i-1}, U_{i-1})).
        \end{align*}
    Additionally, let $x_t, y_t$ such that $x_t \perp (r_t, z_t) | a_t, u_t$ and $y_t \perp x_t | a_t, u_t$ and $y_t \perp (a_t, a_{t-1}, x_{t-1}, z_{t-1}) | u_t$. Assume that the matrices $P^b(U_t | a_t, X_t), P^b(X_t | a_t, U_t), P^b(Y_t | a_t, X_t)$ are invertible. Then
    \begin{align*}
        &P^b(r_t, z_t | a_t, U_t)
        P^b(U_t, z_{t-1} | a_{t-1}, U_{t-1}) \\
        &=
        P^b(r_t, z_t | a_t, X_t)
        P^b(Y_t| a_t, X_t)^{-1}
        P^b(Y_t, z_{t-1} | a_{t-1}, X_{t-1})
        P^b(Y_{t-1}| a_{t-1}, X_{t-1})^{-1}
        P^b(Y_{t-1}| a_{t-1}, U_{t-1})).
        \end{align*}
\end{lemma}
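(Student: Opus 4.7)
The plan is to prove the three matrix identities in sequence, using only the chain rule of probability, the stated conditional independences, and the invertibility assumptions to invert the relevant stochastic matrices. I expect the real work to be bookkeeping: carefully expanding each matrix product into its scalar sum over the hidden index, substituting the right conditional independence to collapse conditioning sets, and then repackaging the sum back into matrix form.

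For the first identity, I would expand the $(y_i, x_{i-1})$ entry of $P^b(Y_i\mid a_i, U_i)\, P^b(U_i, z_{i-1}\mid a_{i-1}, X_{i-1})$ as $\sum_{u_i} P^b(y_i\mid a_i, u_i)\, P^b(u_i, z_{i-1}\mid a_{i-1}, x_{i-1})$. Using $y_i \perp (a_i, a_{i-1}, x_{i-1}, z_{i-1})\mid u_i$ twice, both $P^b(y_i\mid a_i, u_i)$ and $P^b(y_i\mid u_i, z_{i-1}, a_{i-1}, x_{i-1})$ collapse to $P^b(y_i\mid u_i)$. Substituting back and applying the chain rule $\sum_{u_i} P^b(y_i\mid u_i)\, P^b(u_i, z_{i-1}\mid a_{i-1}, x_{i-1}) = P^b(y_i, z_{i-1}\mid a_{i-1}, x_{i-1})$ closes the first part with no invertibility required.

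For the second identity I would first derive a reusable ``change of conditioning'' formula:
\begin{align*}
P^b(U_{i+1}, z_i\mid a_i, U_i) = P^b(U_{i+1}, z_i\mid a_i, X_i)\, P^b(Y_i\mid a_i, X_i)^{-1}\, P^b(Y_i\mid a_i, U_i).
\end{align*}
The first equality here comes from $x_i \perp (u_{i+1}, z_i)\mid a_i, u_i$, which lets me expand $P^b(U_{i+1}, z_i\mid a_i, X_i) = P^b(U_{i+1}, z_i\mid a_i, U_i)\, P^b(U_i\mid a_i, X_i)$ and invert $P^b(U_i\mid a_i, X_i)$. Then $y_i \perp x_i\mid a_i, u_i$ gives $P^b(Y_i\mid a_i, X_i) = P^b(Y_i\mid a_i, U_i)\, P^b(U_i\mid a_i, X_i)$, so $P^b(U_i\mid a_i, X_i)^{-1} = P^b(Y_i\mid a_i, X_i)^{-1} P^b(Y_i\mid a_i, U_i)$, which yields the displayed form. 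Applying exactly the same reasoning at step $i{-}1$ rewrites $P^b(U_i, z_{i-1}\mid a_{i-1}, U_{i-1})$ as $P^b(U_i, z_{i-1}\mid a_{i-1}, X_{i-1})\, P^b(Y_{i-1}\mid a_{i-1}, X_{i-1})^{-1} P^b(Y_{i-1}\mid a_{i-1}, U_{i-1})$. Multiplying the two decompositions then produces a central factor $P^b(Y_i\mid a_i, U_i)\, P^b(U_i, z_{i-1}\mid a_{i-1}, X_{i-1})$, which collapses via the first identity to $P^b(Y_i, z_{i-1}\mid a_{i-1}, X_{i-1})$, and the claim drops out.

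The third identity is structurally identical: I would replay the Step~2 argument with $(r_t, z_t)$ replacing $(u_{i+1}, z_i)$, using the hypothesis $x_t \perp (r_t, z_t)\mid a_t, u_t$ to obtain $P^b(r_t, z_t\mid a_t, U_t) = P^b(r_t, z_t\mid a_t, X_t)\, P^b(Y_t\mid a_t, X_t)^{-1} P^b(Y_t\mid a_t, U_t)$, then combining with the time $t{-}1$ decomposition and invoking the first identity (now at index $t$) to contract $P^b(Y_t\mid a_t, U_t)\, P^b(U_t, z_{t-1}\mid a_{t-1}, X_{t-1})$. The main obstacle I anticipate is simply making sure each inversion lines up on the correct common index; the conditional independences are otherwise exactly calibrated to make every substitution go through in one step.
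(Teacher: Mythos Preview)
Your proposal is correct and follows essentially the same route as the paper: the paper also first establishes the ``change of conditioning'' identity $P^b(U_{i+1}, z_i\mid a_i, U_i) = P^b(U_{i+1}, z_i\mid a_i, X_i)\,P^b(Y_i\mid a_i, X_i)^{-1} P^b(Y_i\mid a_i, U_i)$ (packaged as a separate auxiliary lemma), then proves the first identity by the same conditional-independence substitution and chain-rule collapse you describe, and finally combines the two to obtain the second and third identities. The only cosmetic difference is ordering---the paper derives the first identity inside the proof of the second rather than up front---but the logical content and the use of the invertibility and independence hypotheses are identical.
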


\noindent We are now ready to complete the proof of Theorem~\ref{thm: main result}. Using Lemmas~\ref{lemma: do_r} and \ref{lemma: multiplication}, it holds that
\begin{align*}
    &P^b(r_t, z_t | a_t, U_t)
    \pth{
    \prod_{i=0}^{t-1}
    P^b(U_{i+1}, z_i | a_i, U_i)
    }
    P^b(U_0) \\
    &=
    P^b(r_t, z_t | a_t, X_t)
    \pth{
    \prod_{i=t-1}^{0}
    P^b(Y_{i+1}| a_{i+1}, X_{i+1})^{-1}
    P^b(Y_{i+1}, z_i | a_i, X_i)
    }
    P^b(Y_0| a_0, X_0)^{-1}
    P^b(Y_0| a_0, U_0)
    P^b(U_0).
\end{align*}
As $y_0 \perp a_0 | u_0$
\begin{equation*}
    P^b(Y_0| a_0, U_0)
    P^b(U_0) =
    P^b(Y_0| U_0)
    P^b(U_0)
    =
    P^b(Y_0).
\end{equation*}
Hence,
\begin{align*}
    P^e(r_t) 
    &=
    \sum_{\tau^o \in \T^o_t}
    \pth{\prod_{i=0}^t \pie{i}{a_i | h_i^o)}}
    P^b(r_t, z_t | a_t, X_t)
    \pth{
    \prod_{i=t-1}^{0}
    P^b(Y_{i+1}| a_{i+1}, X_{i+1})^{-1}
    P^b(Y_{i+1}, z_i | a_i, X_i)
    }
    P^b(Y_0| a_0, X_0)^{-1}
    P^b(Y_0).
\end{align*} 
To complete the proof we let $x_i = z_{i-1}$ for $i \geq 1$ and $y_i = x_{i+1} = z_i$ for $i \geq 0$. Then independence assumptions of Lemma~\ref{lemma: multiplication} indeed hold. Moreover, it is enough to assume that $P^b(Z_i | a_i, Z_{i-1})$ is invertible for all $i \geq 1$ as
\begin{equation*}
    P^b(z_i | a_i, z_{i-1}) 
    = 
    \sum_{u_i} P^b(z_i | a_i, z_{i-1}, u_i)P^b(u_i | z_{i-1}, a_i).
\end{equation*}
Or in vector notation
\begin{equation*}
    P^b(Z_i | a_i, Z_{i-1}) = P^b(Z_i | a_i, U_i)P^b(U_i | Z_{i-1}, a_i).
\end{equation*}
Since $P^b(Z_i | a_i, Z_{i-1})$ is invertible, so are $P^b(U_i | Z_{i-1}, a_i) = P^b(U_i | X_i, a_i)$ and $P^b(Z_i | a_i, U_i) = P^b(Y_i | a_i, U_i)$. \\
This completes the proof of the theorem.

\subsection{Proof of Theorem~\ref{thm: main result2}}

We start by stating two auxiliary lemmas (their proof can be found in Appendix~B)

\begin{lemma}
\begin{align*}
        P^e(r_t) 
        &=
        \sum_{\tau^o \in \T^o_t}
        \pth{\prod_{i=0}^t \pie{i}{a_i | h_i^o)}}
        P^b(r_t, o_t | a_t, z_t, U_t)
        \pth{
        \prod_{i=t-1}^{0}
        P^b(U_{i+1}, z_{i+1}, o_i | a_i, z_i, U_i)
        }
        P^b(U_0 | z_0)P^b(z_0).
\end{align*}
    \label{lemma: do_r2}
\end{lemma}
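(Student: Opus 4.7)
The proof parallels that of Lemma~\ref{lemma: do_r} but accommodates the richer causal structure of the Decoupled POMDP in Figure~\ref{fig: Decoupled POMDP}(b): transitions act on the joint state $(z, u)$, independent observations $o$ are emitted from $u$, and the initial distribution factors as $P^b(u_0 | z_0) P^b(z_0)$. My plan is to expand $P^e(r_t)$ as a sum over full trajectories, extract the evaluation-policy factors, and then collect the remaining environment dynamics into the $U$-indexed matrix/vector product displayed in the lemma.

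First I would write $P^e(r_t) = \sum_\tau P^e(r_t, \tau)$ and apply the chain rule along the causal graph. Since only the policy factor differs between $P^e$ and $P^b$, while the initial distribution, transition kernel, observation kernel, and reward function are intrinsic to the environment, extracting the $\pi_e$ factors and relabelling the remaining dynamics as $P^b$-probabilities gives
\begin{equation*}
P^e(r_t) = \sum_{\tau^o \in \T_t^o} \pth{\prod_{i=0}^t \pie{i}{a_i | h_i^o}} \sum_{u_0, \ldots, u_t} P^b(r_t, o_t | a_t, z_t, u_t) \prod_{i=0}^{t-1} P^b(u_{i+1}, z_{i+1}, o_i | a_i, z_i, u_i) \, P^b(u_0 | z_0) P^b(z_0),
\end{equation*}
where I have used the Decoupled POMDP independence $o_i \perp (u_{i+1}, z_{i+1}) | a_i, z_i, u_i$ (since $o_i$ depends only on $u_i$) to group each observation $o_i$ with the following transition, and similarly $o_t$ with the terminal reward.

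Next I would recognize the nested sum over $u_0, \ldots, u_t$ as a sequence of matrix--vector multiplications. Reading right to left, $P^b(U_0 | z_0)$ is a column vector of length $\abs{\U}$; each $P^b(U_{i+1}, z_{i+1}, o_i | a_i, z_i, U_i)$ is a $\abs{\U} \times \abs{\U}$ matrix whose rows are indexed by $U_{i+1}$ and columns by $U_i$; and $P^b(r_t, o_t | a_t, z_t, U_t)$ is a row vector. Contracting over all $u_i$ collapses the nested sum into $P^b(r_t, o_t | a_t, z_t, U_t) \prod_{i=t-1}^{0} P^b(U_{i+1}, z_{i+1}, o_i | a_i, z_i, U_i) \, P^b(U_0 | z_0) P^b(z_0)$, exactly matching the lemma.

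The main technical work is the index bookkeeping when translating the nested summation into matrix form, making sure each contraction ties the ``new state'' row index of one factor to the ``old state'' column index of the next. The crucial structural step is the grouping $P^b(u_{i+1}, z_{i+1}, o_i | a_i, z_i, u_i) = P^b(u_{i+1}, z_{i+1} | a_i, z_i, u_i) P^b(o_i | u_i)$, which follows directly from $o_i$ depending only on $u_i$ in the Decoupled POMDP graph; an analogous factorization applies at the terminal step, identifying $P^b(r_t, o_t | a_t, z_t, u_t)$ with the product of the (possibly stochastic) reward kernel and $P^b(o_t | u_t)$.
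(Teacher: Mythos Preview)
Your proposal is correct and follows essentially the same route as the paper: factor $P^e(r_t,\tau)$ along the Decoupled-POMDP graph, separate the $\pi_e$ terms from the environment dynamics (which coincide under $P^b$), use the independence of $o_i$ from $(a_i,z_i,u_{i+1},z_{i+1})$ given $u_i$ to attach each $P^b(o_i\mid u_i)$ to the adjacent transition (and $o_t$ to the reward), and then rewrite the sum over $u_0,\dots,u_t$ as the stated matrix--vector product. The paper's proof does exactly this via a short backward induction on $P^e(\tau)$, so your write-up only needs the bookkeeping you already outlined.
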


\begin{lemma}
\label{lemma: multiplication2}
    For all $0 \leq i \leq t-1$, let $x_i, y_i$ such that $x_i$ is independent of $(u_{i+1}, o_i)$ given $z_{i+1}, a_i, u_i$, $y_i$ is independent of  $x_i$ given $z_{i+1}, a_i, u_i$, and $y_i$ is independent of $a_i, a_{i-1}, x_{i-1}, z_{i-1}$ given $u_i$. Assume that the matrices $P^b_{([n], J_i)}(U_i| a_i, z_i, X_i), P^b_{(K_i, [n])}(Y_i | a_i, z_i, U_i), P_{(K_i,J_i)}^b(Y_i | a_i, z_i, X_i)$ are invertible. Then \\
\begin{align*}
    P_{(K_i, [n])}^b(Y_i | a_i, z_i, U_i)
    P_{([n],J_{i-1})}^b(U_{i}, z_{i}, o_{i-1} | a_{i-1}, z_{i-1}, X_{i-1})
    =
    P^b_{(K_i, J_{i-1})}(Y_i, z_i, o_{i-1} | a_{i-1}, z_{i-1}, X_{i-1}).
\end{align*}
    Moreover,
\begin{align*}
    &P(U_{i+1}, z_{i+1}, o_i | a_i, z_i, U_i)
    P(U_{i}, z_{i}, o_{i-1} | a_{i-1}, z_{i-1}, U_{i-1}) \\
    &=
    P_{([n],J_i)}^b(U_{i+1}, z_{i+1}, o_i | a_i, z_i, X_i)
    P_{(K_i,J_i)}^b(Y_i | a_i, z_i, X_i)^{-1}
    P^b_{(K_i, J_{i-1})}(Y_i, z_i, o_{i-1} | a_{i-1}, z_{i-1}, X_{i-1}) \times \\
    &~~~~~ 
    P_{(K_{i-1},J_{i-1})}^b(Y_{i-1} | a_{i-1}, z_{i-1}, X_{i-1})^{-1}
    P_{(K_{i-1}, [n])}^b(Y_{i-1} | a_{i-1}, z_{i-1}, U_{i-1}).
\end{align*}

    Additionally, let $x_t, y_t$ such that $x_t$ is independent of $r_t, z_t$ given $a_t, u_t$, $y_t$ is independent of $x_t$ given $a_t, u_t$ and independent of $a_t, a_{t-1}, x_{t-1}, z_{t-1}$ given $u_t$. Assume that the matrices $P^b(U_t | a_t, X_t), P^b(X_t | a_t, U_t), P^b(Y_t | a_t, X_t)$ are invertible. Then
\begin{align*}
    &P(r_t, o_t | a_t, z_t, U_t)
    P(U_t, z_t, o_{t-1} | a_{t-1}, z_{t-1}, U_{t-1}) \\
    &=
    P_{J_t}^b(r_t, o_t | a_t, z_t, X_t)
    P_{(K_t,J_t)}^b(Y_t | a_t, z_t, X_t)^{-1}
    P^b_{(K_t, J_{t-1})}(Y_t, z_t, o_{t-1} | a_{t-1}, z_{t-1}, X_{t-1}) \times \\
    &~~~~~ 
    P_{(K_{t-1},J_{t-1})}^b(Y_{t-1} | a_{t-1}, z_{t-1}, X_{t-1})^{-1}
    P_{(K_{t-1}, [n])}^b(Y_{t-1} | a_{t-1}, z_{t-1}, U_{t-1}).
\end{align*}
\end{lemma}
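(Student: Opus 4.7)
The plan is to follow the same blueprint as the POMDP version in Lemma~\ref{lemma: multiplication}: interpret each matrix product entrywise as a marginalization over the unobserved state, use the listed conditional independences to shift which variables appear in the conditioning, and use the invertibility assumptions to swap occurrences of $U_i$ for the proxies $X_i$ and $Y_i$. I would prove the three displayed identities in order, with the first serving as a building block in the other two.

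For the first identity, I would expand the $(k,j)$-entry of the LHS as
\[
\sum_{u_i} P^b(y_i^{(k)} | a_i, z_i, u_i)\, P^b(u_i, z_i, o_{i-1} | a_{i-1}, z_{i-1}, x_{i-1}^{(j)}),
\]
and expand the $(k,j)$-entry of the RHS by inserting a sum over $u_i$ and applying the chain rule. The two summands coincide once one uses the assumed $y_i \perp (a_i, a_{i-1}, x_{i-1}, z_{i-1}) | u_i$ together with the Decoupled-POMDP-implied $y_i \perp (z_i, o_{i-1}) | u_i$, which reduce both $P^b(y_i | a_i, z_i, u_i)$ and $P^b(y_i | u_i, z_i, o_{i-1}, a_{i-1}, z_{i-1}, x_{i-1})$ to $P^b(y_i | u_i)$. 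No invertibility is needed for this step.

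For the second identity, the crux is to introduce $X_i$ and $Y_i$ into each of the two factors and then cancel a matching pair. Using $x_i \perp (u_{i+1}, z_{i+1}, o_i) | z_i, a_i, u_i$ and marginalizing over $u_i$ gives
\[
P^b_{([n], J_i)}(U_{i+1}, z_{i+1}, o_i | a_i, z_i, X_i) = P^b(U_{i+1}, z_{i+1}, o_i | a_i, z_i, U_i)\, P^b_{([n], J_i)}(U_i | a_i, z_i, X_i),
\]
which, by invertibility of the last factor, isolates $P^b(U_{i+1}, z_{i+1}, o_i | a_i, z_i, U_i)$. A parallel use of $y_i \perp x_i | a_i, z_i, u_i$ yields $P^b_{(K_i, J_i)}(Y_i | a_i, z_i, X_i) = P^b_{(K_i, [n])}(Y_i | a_i, z_i, U_i)\, P^b_{([n], J_i)}(U_i | a_i, z_i, X_i)$, from which $P^b_{([n], J_i)}(U_i | a_i, z_i, X_i)^{-1}$ can be re-expressed through the $Y_i$-matrices. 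Applying the same two steps with indices shifted to $i{-}1$ on the second factor $P^b(U_i, z_i, o_{i-1} | a_{i-1}, z_{i-1}, U_{i-1})$, where the $Y_i$-to-$U_i$ replacement is supplied by Part~1 with $X_{i-1}$, produces an adjacent pair consisting of $P^b_{(K_i, [n])}(Y_i | a_i, z_i, U_i)$ and its inverse. These cancel and the remaining five factors assemble into exactly the expression claimed. The third identity, for the reward term at index $t$, is obtained identically: substitute $(r_t, o_t)$ for $(U_{i+1}, z_{i+1}, o_i)$ throughout; the independence hypotheses stated for $x_t, y_t$ are precisely what the substitutions require.

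The main obstacle is notational rather than conceptual: one must keep track of the shape of every matrix (which row or column index ranges over $\U$, $\Z$, $\mathcal{O}$, or a restricted subset $K_i, J_i$), verify that each matrix being inverted is square (guaranteed by $|K_i| = |J_i| = |\U|$), and ensure that each conditional independence invoked to insert or remove $U_i$ matches one of the lemma's hypotheses exactly. A secondary subtlety is that Part~1 relies on $y_i \perp z_i | u_i$ and $y_i \perp o_{i-1} | u_i$, which are not in the lemma's explicit list but are inherited from the Decoupled POMDP graph whenever $Y_i$ is instantiated as an independent observation generated from $u_i$ alone, as in the intended application of Theorem~\ref{thm: main result2}.
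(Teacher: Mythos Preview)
Your proposal is correct and follows essentially the same route as the paper: the paper packages your two proxy-substitution steps (conditioning on $X_i$, then swapping in $Y_i$) into an auxiliary Lemma~\ref{lemma: factorization2}, applies it at indices $i$ and $i-1$, and then collapses the middle product $P^b_{(K_i,[n])}(Y_i|a_i,z_i,U_i)\,P^b_{([n],J_{i-1})}(U_i,z_i,o_{i-1}|a_{i-1},z_{i-1},X_{i-1})$ via exactly the Part~1 identity you describe. Your observation that the argument tacitly needs $y_i \perp (z_i,o_{i-1}) \mid u_i$ beyond the lemma's stated hypotheses is accurate; the paper's own proof quietly enlarges the independence set in the same way.
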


\noindent We are now ready to complete the proof of Theorem~\ref{thm: main result2}. Using Lemmas~\ref{lemma: multiplication2} and \ref{lemma: do_r2}, it holds that
\begin{align*}
    &P^b(r_t, o_t | a_t, z_t, U_t)
    \pth{
    \prod_{i=t-1}^{0}
    P^b(U_{i+1}, z_{i+1}, o_i | a_i, z_i, U_i)
    }
    P^b(U_0 | z_0)P^b(z_0) \\
    &=
    P_{J_t}^b(r_t, o_t | a_t, z_t, X_t)
    \pth{
    \prod_{i=t-1}^{0}
    P_{(K_{i+1},J_{i+1})}^b(Y_{i+1} | a_{i+1}, z_{i+1}, X_{i+1})^{-1}
    P^b_{(K_{i+1}, J_i)}(Y_{i+1}, z_{i+1}, o_i | a_i, z_i, X_i)
    } \\
    &~~~
    P_{(K_0,J_0)}^b(Y_0 | a_0, z_0, X_0)^{-1}
    P^b_{(K_0, [n])}(Y_0 | a_0, z_0, U_0)
    P^b(U_0 | z_0)P^b(z_0)
\end{align*}
As $y_0 \perp a_0) | u_0, z_0$
\begin{equation*}
    P^b_{(K_0, [n])}(Y_0 | a_0, z_0, U_0)
    P^b(U_0 | z_0) 
    =
    P^b_{(K_0, [n])}(Y_0 | z_0, U_0)
    P^b(U_0 | z_0)
    =
    P^b(Y_0 | z_0).
\end{equation*}
To complete the proof we let $X_i = Z_{i-1}$ and $Y_i = O_i$ for $i \geq 0$. Then independence assumptions of Lemma~\ref{lemma: multiplication2} indeed hold. Moreover, it is enough to assume that $P^b(U_i | a_i, Z_{i-1})$ and $P^b(Z_i | a_i, Z_{i-1})$ are invertible for all $i \geq 1$ as
\begin{equation*}
    P^b(o_i | a_i, z_i, z_{i-1}) 
    = 
    \sum_{u_i} P^b(o_i | a_i, z_i, z_{i-1}, u_i)P^b(u_i | a_i, z_i, z_{i-1}).
\end{equation*}
Or in vector notation
\begin{equation*}
    P^b_{(K_i, J_i)}(O_i | a_i, z_i, Z_{i-1}) 
    = 
    P^b_{(K_i, [n])}(O_i | a_i, z_i, U_i) 
    P^b_{([n], J_i)}(U_i| a_i, z_i, Z_{i-1}).
\end{equation*}
Since $P^b_{(K_i, J_i)}(O_i | a_i, z_i, Z_{i-1}) $ is invertible, so are $P^b_{(K_i, [n])}(O_i | a_i, z_i, U_i) $ and $P^b_{([n], J_i)}(U_i| a_i, z_i, Z_{i-1})$. \\
This completes the proof of the theorem.

\newpage
\section{Auxilary Lemmas} \label{appendixB}

\subsection{Proof of Lemma \ref{lemma: do_r}}
\begin{proof}
\begin{align*}
    P^e(r_t) 
    &= 
    \sum_{\tau \in \T_t} 
    P^e(r_t | \tau)
    P^e(\tau) \\
    &= 
    \sum_{\tau \in \T_t} 
    P^{b}(r_t | a_t, u_t)
    P^e(\tau).
\end{align*}
Next we have that
\begin{align*}
    P^e(\tau) 
    &=
    P^e(u_0, z_0, a_0, \hdots, u_t, z_t, a_t) \\
    &=
    P^e(a_t | u_0, z_0, a_0, \hdots, u_t, z_t)
    P^e(u_0, z_0, a_0, \hdots, u_t, z_t) \\
    &=
    \pie{t}{a_t | h_t^o}
    P^e(z_t | u_0, z_0, a_0, \hdots, u_t)
    P^e(u_0, z_0, a_0, \hdots , u_{t-1}, z_{t-1}, a_{t-1}, u_t) \\
    &=
    \pie{t}{a_t | h_t^o}
    P^b(z_t | u_t)
    P^e(u_0, z_0, a_0, \hdots , u_{t-1}, z_{t-1}, a_{t-1}, u_t) \\
    &=
    \pie{t}{a_t | h_t^o}
    P^b(z_t | u_t)
    P^e(u_t | u_0, z_0, a_0, \hdots , u_{t-1}, z_{t-1}, a_{t-1})
    P^e(u_0, z_0, a_0, \hdots , u_{t-1}, z_{t-1}, a_{t-1}) \\
    &=
    \pie{t}{a_t | h_t^o}
    P^b(z_t | u_t)
    P^b(u_t | u_{t-1}, a_{t-1})
    P^e(u_0, z_0, a_0, \hdots , u_{t-1}, z_{t-1}, a_{t-1})
\end{align*}
By backwards induction we get that
\begin{align*}
    P^e(\tau) = 
    \pth{\prod_{i=0}^t \pie{i}{a_i | h_i^o} P^b(z_i | u_i)}
    \pth{\prod_{i=0}^{t-1} P^b({u_{i+1} | u_i, a_i}) }\nu_0(u_0).
\end{align*}
As $z_i$ is independent of $a_{i}, a_{i-1}$ given $u_i$ under measure $P^b$, we can write
\begin{align*}
   P^{b}(r_t | a_t, u_t)
   P^e(\tau)
   =
   P^{b}(r_t, z_t | a_t, u_t)
   \pth{\prod_{i=0}^t \pie{i}{a_i | h_i^o}}
    \pth{\prod_{i=0}^{t-1} P^b({u_{i+1}, z_i | u_i, a_i})}\nu_0(u_0),
\end{align*}
which in vector form yields
\begin{align*}
        P^e(r_t) 
        &=
        \sum_{\tau^o \in \T^o_t}
        \pth{\prod_{i=0}^t \pie{i}{a_i | h_i^o)}}
        P^b(r_t, z_t | a_t, U_t)
        \pth{
        \prod_{i=0}^{t-1}
        P^b(U_{i+1}, z_i | a_i, U_i)
        }
        P^b(U_0).
\end{align*}
Here, the summation has now changed to observable trajectories.
\end{proof}

\cmnt{
\subsection{Proof of Lemma \ref{lemma: do_r}}
\begin{proof}[\unskip\nopunct]
    Let $\pi : \Z \to \A \in \Z$.
\begin{align}
    P^b(r_t | \pearldo{\pi}) 
    &= 
    \sum_{u_{0:t}} P^b(r_t | \pearldo{\pi}, u_{0:t}) P^b(u_{0:t} | \pearldo{\pi}) \\
    &=
    \sum_{u_{0:t}} 
    P^b(r_t | \pearldo{\pi}, u_{0:t}) 
    P^b(u_0 | \pearldo{\pi})
    \prod_{i=0}^{t-1} P^b(u_{i+1} | \pearldo{\pi}, u_{0:i}).
    \label{eq: global intervention}
\end{align}
$P^b(u_0 | \pearldo{\pi}) = P^b(u_0)$ by definition and by rule 3. Next we evaluate $P^b(u_{i+1} | \pearldo{\pi}, u_{0:i})$.
\begin{align}
    &P^b(u_{i+1} | \pearldo{\pi}, u_{0:i}) = \\
    &~~~~~=
    \sum_{z_{0:i} \in \Z^i} 
    P^b(u_{i+1} | \pearldo{\pi}, z_{0:i}, u_{0:i})
    P^b(z_i | \pearldo{\pi}, z_{0:i-1}, u_{0:i})
    P^b(z_{0:i-1} | \pearldo{\pi}, u_{0:i}). \label{eq: transition factorization}
\end{align}
Next, we have that
\begin{align}
    P^b(u_{i+1} | \pearldo{\pi}, z_{0:i}, u_{0:i})
    &~~~=
    \sum_{a_{0:i} \in \A^{i+1}}
    P^b(u_{i+1} | \pearldo{a_0},\hdots, \pearldo{a_i}, z_{0:i}, u_{0:i}) 
    \prod_{k=0}^{i}\pi(a_k | z_k) \\
    &\underset{(\text{rule }3)}{=}
    \sum_{a_{0:i} \in \A^{i+1}}
    P^b(u_{i+1} | \pearldo{a_i}, z_{0:i}, u_{0:i}) 
    \prod_{k=0}^{i}\pi(a_k | z_k) \\
    &~~~=
    \sum_{a_i \in \A}
    P^b(u_{i+1} | \pearldo{a_i}, z_{0:i}, u_{0:i}) 
    \pi(a_i | z_i) \\
    &\underset{(\text{rule }1)}{=}
    \sum_{a_i \in \A}
    P^b(u_{i+1} | \pearldo{a_i}, u_i) 
    \pi(a_i | z_i) \\
    &\underset{(\text{rule }2)}{=}
    \sum_{a_i \in \A}
    P^b(u_{i+1} | a_i, u_i) 
    \pi(a_i | z_i) 
    \label{eq: factor1}
\end{align}
Also,
\begin{align}
    P^b(z_i | \pearldo{\pi}, z_{0:i-1}, u_{0:i})
    &~~~=
    \sum_{a_{0:i-1} \in \A^i}
    P^b(z_i | \pearldo{a_0}, \hdots, \pearldo{a_{i-1}}, z_{0:i-1}, u_{0:i})
    \prod_{k=0}^{i-1}\pi(a_k | z_k) \\
    &\underset{(\text{rule }3)}{=}
    \sum_{a_{0:i-1} \in \A^i}
    P^b(z_i | z_{0:i-1}, u_{0:i}) 
    \prod_{k=0}^{i-1}\pi(a_k | z_k) \\
    &~~~=
    P^b(z_i | u_i).
    \label{eq: factor2}
\end{align}
Plugging Equations~\eqref{eq: factor1},\eqref{eq: factor2} into \eqref{eq: transition factorization} yields
\begin{align}
    P^b(u_{i+1} | \pearldo{\pi}, u_{0:i})
    &=
    \sum_{z_{0:i} \in \Z^i} 
    P^b(u_{i+1} | \pearldo{\pi}, z_{0:i}, u_{0:i})
    P^b(z_i | \pearldo{\pi}, z_{0:i-1}, u_{0:i})
    P^b(z_{0:i-1} | \pearldo{\pi}, u_{0:i}) \\
    &=
    \sum_{z_{0:i} \in \Z^i} 
    \sum_{a_i \in \A}
    P^b(u_{i+1} | a_i, u_i) 
    \pi(a_i | z_i)
    P^b(z_i | u_i)
    P^b(z_{0:i-1} | \pearldo{\pi}, u_{0:i}) \\
    &=
    \sum_{z_i \in \Z} 
    \sum_{a_i \in \A}
    P^b(u_{i+1} | a_i, u_i) 
    \pi(a_i | z_i)
    P^b(z_i | u_i)
    \sum_{z_{0:i-1} \in \Z^{i-1}} 
    P^b(z_{0:i-1} | \pearldo{\pi}, u_{0:i}) \\
    &=
    \sum_{z_i \in \Z} 
    \sum_{a_i \in \A}
    P^b(u_{i+1} | a_i, u_i) 
    \pi(a_i | z_i)
    P^b(z_i | u_i) \\
    &=
    \sum_{z_i \in \Z} 
    \sum_{a_i \in \A}
    P^b(u_{i+1} | a_i, z_i, u_i) 
    \pi(a_i | z_i)
    P^b(z_i | a_i, u_i)
    \label{eq: u-do factorization}
\end{align}

\noindent
We continue by evaluating $P^b(r_t | \pearldo{\pi}, u_{0:t})$. Similar to before,
\begin{align}
    &P^b(r_t | \pearldo{\pi}, u_{0:t}) = \\
    &\underset{(\text{rule }1)}{=}
    P^b(r_t | \pearldo{\pi}, u_t) \\
    &~~~=
    \sum_{z_t \in \Z} 
    P^b(r_t | \pearldo{\pi}, u_t, z_t)
    P^b(z_t | \pearldo{\pi}, u_t) \\
    &~~~=\text{\todo{need to explain}}
    \sum_{z_t \in \Z} 
    \sum_{a_t \in \A}
    P^b(r_t | a_t, u_t)
    \pi(a_t | z_t)
    P^b(z_t | u_t) \\
    &~~~=
    \sum_{z_t \in \Z} 
    \sum_{a_t \in \A}
    P^b(r_t | a_t, z_t, u_t)
    \pi(a_t | z_t)
    P^b(z_t | a_t, u_t)
    \label{eq: r-do factorization}
\end{align}
Plugging Equations~\eqref{eq: u-do factorization},\eqref{eq: r-do factorization} into Equation \eqref{eq: global intervention} yields
\begin{align*}
    P^b(r_t | \pearldo{\pi}) 
    &=
    \sum_{u_{0:t}} 
    P^b(r_t | \pearldo{\pi}, u_{0:t}) 
    P^b(u_0 | \pearldo{\pi})
    \prod_{i=0}^{t-1} 
    P^b(u_{i+1} | \pearldo{\pi}, u_{0:i}) \\
    &=
    \sum_{u_{0:t}} 
    \sum_{z_t \in \Z} 
    \sum_{a_t \in \A}
    P^b(r_t | a_t, z_t, u_t)
    \pi(a_t | z_t)
    P^b(z_t | a_t, u_t)
    P^b(u_0)
    \prod_{i=0}^{t-1} 
    \pth{
    \sum_{z_i \in \Z} 
    \sum_{a_i \in \A}
    P^b(u_{i+1} | a_i, z_i, u_i) 
    \pi(a_i | z_i)
    P^b(z_i | a_i, u_i)
    }
\end{align*}

Which can be written in vector form as
\begin{align*}
    P^b(r_t | \pearldo{\pi}) 
    &=
    \sum_{z_{0:t}}\sum_{a_{0:t}}
    \pth{\prod_{i=0}^t \pi(a_i | z_i)}
    P^b(r_t, z_t | a_t, U_t)
    \pth{
    \prod_{i=0}^{t-1}
    P^b(U_{i+1}, z_i | a_i, U_i)
    }
    P^b(U_0)
\end{align*}

\noindent This completes the proof.
\end{proof}
}
\subsection{Proof of Lemma \ref{lemma: multiplication}}

We begin by proving an additional auxiliary lemma.

\begin{lemma}
\label{lemma: factorization}
Let $x, y, w, u, z, a$ be nodes in a POMDP model such that $x \perp (w,z) | a, u$, and $y \perp x | a, u$. Assume in addition that the matrices $P^b(U| a, X), P^b(Y | a, X), P^b(Y| a, U)$ are invertible for all $a$. Then
\begin{equation*}
    P^b(W, z | a, U)
    =
    P^b(W, z | a, X)P^b(Y | a, X)^{-1}P^b(Y | a, U).
\end{equation*}
\end{lemma}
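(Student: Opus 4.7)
The plan is to prove the identity by turning each of the two conditional-independence assumptions into a matrix factorization via a simple marginalization-over-$u$ argument, and then combining the two factorizations by invertibility.

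First I would exploit the assumption $x \perp (w,z) \mid a, u$. Marginalizing over $u$ gives $P^b(w,z \mid a,x) = \sum_u P^b(w,z \mid a, u, x)\, P^b(u \mid a, x) = \sum_u P^b(w,z \mid a, u)\, P^b(u \mid a, x)$, where the last equality uses the independence. In the vector/matrix notation of the paper, this is exactly the identity
\begin{equation*}
P^b(W, z \mid a, X) \;=\; P^b(W, z \mid a, U)\, P^b(U \mid a, X).
\end{equation*}
Since $P^b(U \mid a, X)$ is assumed invertible, right-multiplying by its inverse yields $P^b(W, z \mid a, U) = P^b(W, z \mid a, X)\, P^b(U \mid a, X)^{-1}$.

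Next I would rewrite $P^b(U \mid a, X)^{-1}$ in terms of the observable $Y$, using $y \perp x \mid a, u$. The same marginalization-over-$u$ step, now with $y$ in place of $(w,z)$, gives $P^b(Y \mid a, X) = P^b(Y \mid a, U)\, P^b(U \mid a, X)$. Using the invertibility of both $P^b(Y \mid a, U)$ and $P^b(Y \mid a, X)$, we solve for the inverse of $P^b(U \mid a, X)$ and obtain
\begin{equation*}
P^b(U \mid a, X)^{-1} \;=\; P^b(Y \mid a, X)^{-1}\, P^b(Y \mid a, U).
\end{equation*}
Substituting this back into the expression from the first step produces exactly the claimed identity.

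The only thing that requires care is compatibility of dimensions: the assumptions $\abs{\Z} \geq \abs{\U}$ together with the stated invertibilities force $P^b(U \mid a, X)$, $P^b(Y \mid a, X)$, and $P^b(Y \mid a, U)$ to be square of size $\abs{\U} \times \abs{\U}$, so all inverses and products in the final expression are well-defined. I do not expect any genuine obstacle here; the whole argument is two applications of the law of total probability (each enabled by one of the two independence assumptions) followed by one linear-algebra elimination.
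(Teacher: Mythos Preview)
Your proposal is correct and follows essentially the same route as the paper's own proof: both arguments use $x \perp (w,z)\mid a,u$ to obtain $P^b(W,z\mid a,X)=P^b(W,z\mid a,U)\,P^b(U\mid a,X)$, use $y\perp x\mid a,u$ to obtain $P^b(Y\mid a,X)=P^b(Y\mid a,U)\,P^b(U\mid a,X)$, and then eliminate the unobservable $P^b(U\mid a,X)$ via the invertibility assumptions. The only (cosmetic) difference is that the paper writes the marginalization step implicitly as a matrix product rather than as an explicit sum over $u$.
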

\begin{proof}
We start by showing that
\begin{enumerate}
\item If $x \perp (w,z) | a, u$ and if $P^b(U| a, X)$ is invertible for every $a$, then
\begin{equation*}
    P^b(W, z | a, U) 
    =
    P^b(W, z | a, X)P^b(U| a, X)^{-1}.
\end{equation*}
 \item If $x \perp y | a, u$ and if $P^b(X | a, U)$ is invertible for every $a$, then
\begin{equation*}
P^b(U | a, X) 
=
P^b(Y | a, U)^{-1}P^b(Y | a, X).
\end{equation*}
\end{enumerate}
We have that
\begin{align*}
P^b(w, z | a, U)P^b(U| a, x) 
&= 
P^b(w, z | a, x, U)P^b(U| a, x)  \\
&=P^b(w, z | a, x)
\end{align*}
The above is true for every $\{x, w\}$, therefore
\begin{equation}
P^b(W, z | a, U) 
=
P^b(W, z | a, X)P^b(U| a, X)^{-1}.
\label{eq: lemma_part1}
\end{equation}
Similarly, for the second part, we have that
\begin{align*}
P^b(y| a, U)
P^b(U | a, x) 
&=
P^b(y | a, x, U)
P^b(U | a, x) \\
&=
P^b(y | a, x).
\end{align*}
The above is true for every $\{x, y\}$, therefore
\begin{equation}
P^b(U | a, X) 
=
P^b(Y | a, U)^{-1}P^b(Y | a, X).
\label{eq: lemma_part2}
\end{equation}
Combining Equations~\eqref{eq: lemma_part1} and \eqref{eq: lemma_part2} yields
\begin{equation*}
    P^b(W, z | a, U)
    =
    P^b(W, z | a, X)P^b(Y | a, X)^{-1}P^b(Y | a, U).
\end{equation*}
\end{proof}

\noindent We can now continue to prove Lemma~\ref{lemma: multiplication}.

\begin{proof}[Proof of Lemma~\ref{lemma: multiplication}]
Let $x_i, y_i$ such that $x_i \perp (u_{i+1}, z_i) | a_i, u_i$ and $y_i \perp x_i | a_i, u_i$. Assume that the matrices $P^b(U_i | a_i, X_i), P^b(X_i | a_i, U_i), P^b(Y_i | a_i, X_i), P^b(Y_i | a_i, U_i)$ are invertible. Then, by Lemma~\ref{lemma: factorization} 
\begin{equation*}
    P^b(U_{i+1}, z_i | a_i, U_i)
    =
    P^b(U_{i+1}, z_i | a_i, X_i)
    P^b(Y_i| a_i, X_i)^{-1}
    P^b(Y_i| a_i, U_i).
\end{equation*}
Next we wish to evaluate
\begin{align}
    &P^b(U_{i+1}, z_i | a_i, U_i)
    P^b(U_i, z_{i-1} | a_{i-1}, U_{i-1}) \\
    &=
    P^b(U_{i+1}, z_i | a_i, X_i)
    P^b(Y_i| a_i, X_i)^{-1}
    P^b(Y_i| a_i, U_i)
    P^b(U_i, z_{i-1} | a_{i-1}, X_{i-1})
    P^b(Y_{i-1}| a_{i-1}, X_{i-1})^{-1}
    P^b(Y_{i-1}| a_{i-1}, U_{i-1})).
    \label{eq: lemma4 result ref1}
\end{align}
For this, let us evaluate
\begin{equation*}
    P^b(Y_i| a_i, U_i)
    P^b(U_i, z_{i-1} | a_{i-1}, X_{i-1}).
\end{equation*}
Assume that $y_i \perp (a_i, a_{i-1}, x_{i-1}, z_{i-1}) | u_i$, then
\begin{align*}
&\sum_{u_i}P^b(y_i| a_i, u_i)
P^b(u_i, z_{i-1} | a_{i-1}, x_{i-1}) \\
&=
\sum_{u_i}P^b(y_i| a_{i-1}, z_{i-1}, x_{i-1}, u_i)
P^b(u_i, z_{i-1} | a_{i-1}, x_{i-1}) \\
&=
\sum_{u_i}P^b(y_i| a_{i-1}, z_{i-1}, x_{i-1}, u_i)
P^b(u_i | z_{i-1}, a_{i-1}, x_{i-1})
P^b(z_{i-1} | a_{i-1}, x_{i-1}) \\
&=
P^b(y_i| a_{i-1}, z_{i-1}, x_{i-1})
P^b(z_{i-1} | a_{i-1}, x_{i-1}) \\
&=
P^b(y_i, z_{i-1} | a_{i-1}, x_{i-1}).
\end{align*}
Therefore
\begin{align*}
P^b(Y_i| a_i, U_i)
P^b(U_i, z_{i-1} | a_{i-1}, X_{i-1})
=
P^b(Y_i, z_{i-1} | a_{i-1}, X_{i-1}).
\end{align*}
This proves the first part of the lemma. The second part immediately follows due to Equation~\eqref{eq: lemma4 result ref1}. That is,
\begin{align*}
    &P^b(U_{i+1}, z_i | a_i, U_i)
    P^b(U_i, z_{i-1} | a_{i-1}, U_{i-1}) \\
    &=
    P^b(U_{i+1}, z_i | a_i, X_i)
    P^b(Y_i| a_i, X_i)^{-1}
    P^b(Y_i, z_{i-1} | a_{i-1}, X_{i-1})
    P^b(Y_{i-1}| a_{i-1}, X_{i-1})^{-1}
    P^b(Y_{i-1}| a_{i-1}, U_{i-1})).
\end{align*}
As the above holds for all $i \geq 1$, the proof is complete. The proof for the third part follows the same steps with $u_{t+1}$ replaced by $r_t$.
\end{proof}

\newpage

\subsection{Proof of Lemma \ref{lemma: do_r2}}
\begin{proof}
\begin{align*}
    P^e(r_t) 
    &= 
    \sum_{\tau \in \T_t} 
    P^e(r_t | \tau)
    P^e(\tau) \\
    &= 
    \sum_{\tau \in \T_t} 
    P^{b}(r_t | a_t, z_t, u_t)
    P^e(\tau).
\end{align*}
Next we have that
\begin{align*}
    P^e(\tau) 
    &=
    P^e(u_0, z_0, o_0, a_0, \hdots, u_t, z_t, o_t, a_t) \\
    &=
    P^e(a_t | u_0, z_0, o_0, a_0, \hdots, u_t, z_t, o_t)
    P^e(u_0, z_0, o_0, a_0, \hdots, u_t, z_t, o_t) \\
    &=
    \pie{t}{a_t | h_t^o}
    P^b(o_t | u_0, z_0, o_0, a_0, \hdots, u_t, z_t)
    P^e(u_0, z_0, o_0, a_0, \hdots, u_t, z_t) \\
    &=
    \pie{t}{a_t | h_t^o}
    P^b(o_t | u_t)
    P^e(u_t, z_t | u_0, z_0, o_0, a_0, \hdots, u_{t-1}, z_{t-1}, o_{t-1}, a_{t-1})
    P^e(u_0, z_0, o_0, a_0, \hdots, u_{t-1}, z_{t-1}, o_{t-1}, a_{t-1}) \\
    &=
    \pie{t}{a_t | h_t^o}
    P^b(o_t | u_t)
    P^b(u_t, z_t | u_{t-1}, z_{t-1}, a_{t-1})
    P^e(u_0, z_0, o_0, a_0, \hdots, u_{t-1}, z_{t-1}, o_{t-1}, a_{t-1})
\end{align*}
By backwards induction we get that
\begin{align*}
    P^e(\tau) = 
    \pth{\prod_{i=0}^t \pie{i}{a_i | h_i^o} P^b(o_i | u_i)}
    \pth{\prod_{i=0}^{t-1} P^b({u_{i+1}, z_{i+1} | u_i, z_i, a_i}) }P^b(u_0 | z_0)P^b(z_0).
\end{align*}
As $o_i$ is independent of $a_i, z_i$ given $u_i$ under measure $P^b$, we can write
\begin{align*}
   P^{b}(r_t | a_t, z_t, u_t)
   P^e(\tau)
   =
   P^{b}(r_t, o_t | a_t, z_t, u_t)
   \pth{\prod_{i=0}^t \pie{i}{a_i | h_i^o}}
    \pth{\prod_{i=0}^{t-1} P^b({u_{i+1}, z_{i+1}, o_i | u_i, z_i, a_i}) }P^b(u_0 | z_0)P^b(z_0),
\end{align*}
which in vector form yields
\begin{align*}
        P^e(r_t) 
        &=
        \sum_{\tau^o \in \T^o_t}
        \pth{\prod_{i=0}^t \pie{i}{a_i | h_i^o)}}
        P^b(r_t, o_t | a_t, z_t, U_t)
        \pth{
        \prod_{i=t-1}^{0}
        P^b(U_{i+1}, z_{i+1}, o_i | a_i, z_i, U_i)
        }
        P^b(U_0 | z_0)P^b(z_0).
\end{align*}
Here, the summation has now changed to observable trajectories.
\end{proof}

\subsection{Proof of Lemma \ref{lemma: multiplication2}}

The proof of this lemma is similar to that of Lemma~\ref{lemma: multiplication} and is brought here for completeness. We begin by proving an auxiliary lemma.

\begin{lemma}
\label{lemma: factorization2}
Let $x, y, w, u, z, z', o, a$ be nodes in a Decoupled POMDP model such that $x \perp (w,o) | a, z, u$, and $y \perp x | a, z, u$. Let $I, J, K$ be index sets such that $\abs{I}=\abs{J}=\abs{K}=\abs{U}=n$. Also let $[n]$ be the index set $\{1, \hdots, |U|\}$. Assume in addition that the matrices $P^b_{[n], J}(U| a, z, X), P^b_{(K, [n])}(Y | a, z, U), P_{(K,J)}^b(Y | a, z, X)$ are invertible for all $a, z$. Then
\begin{equation*}
    P_{(I, [n])}^b(W, z', o | a, z, U)
    =
    P_{(I,J)}^b(W, z', o | a, z, X)P_{(K,J)}^b(Y | a, z, X)^{-1}P_{(K, [n])}^b(Y | a, z, U).
\end{equation*}
\end{lemma}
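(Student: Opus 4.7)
The plan is to mirror the two-step argument used in the proof of Lemma~\ref{lemma: factorization}, adapted to the Decoupled POMDP setting where we additionally condition on the observed state $z$ and must track the index subsets $I, J, K$ carefully so that all matrix products have the right shapes.

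First I would exploit the independence $x \perp (w, o) \mid a, z, u$. For any $i \in I$ and $j \in J$, conditioning and then marginalizing gives
\begin{equation*}
P^b(w_i, z', o \mid a, z, U)\, P^b(U \mid a, z, x_j) = P^b(w_i, z', o \mid a, z, x_j, U)\, P^b(U \mid a, z, x_j) = P^b(w_i, z', o \mid a, z, x_j),
\end{equation*}
where the first equality uses the assumed independence and the second is total probability over $U$. Assembling this identity over the chosen rows $i\in I$ and columns $j \in J$ gives the matrix equation
\begin{equation*}
P_{(I, [n])}^b(W, z', o \mid a, z, U)\, P_{([n], J)}^b(U \mid a, z, X) = P_{(I, J)}^b(W, z', o \mid a, z, X).
\end{equation*}
Since $P^b_{([n], J)}(U \mid a, z, X)$ is $n \times n$ and invertible by hypothesis, we may right-multiply by its inverse.

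Second, I would use the independence $y \perp x \mid a, z, u$ to rewrite $P^b_{([n], J)}(U\mid a, z, X)^{-1}$ in terms of quantities that do not depend on the unobserved $U$ in the leading factor. For each $k \in K$ and $j \in J$,
\begin{equation*}
P^b(y_k \mid a, z, U)\, P^b(U \mid a, z, x_j) = P^b(y_k \mid a, z, x_j, U)\, P^b(U \mid a, z, x_j) = P^b(y_k \mid a, z, x_j),
\end{equation*}
which assembled in matrix form reads
\begin{equation*}
P^b_{(K, [n])}(Y \mid a, z, U)\, P^b_{([n], J)}(U \mid a, z, X) = P^b_{(K, J)}(Y \mid a, z, X).
\end{equation*}
Invertibility of the two factors on the outside lets me solve:
\begin{equation*}
P^b_{([n], J)}(U \mid a, z, X)^{-1} = P^b_{(K, J)}(Y \mid a, z, X)^{-1}\, P^b_{(K, [n])}(Y \mid a, z, U).
\end{equation*}

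Finally, substituting this expression for the inverse into the first identity yields the claimed factorization. The only real bookkeeping obstacle is making sure every matrix product has compatible dimensions: the index sets $I, J, K$ are all of cardinality $n = |U|$ while $[n]$ labels the full hidden-state axis, so that the ``square'' factors $P^b_{([n],J)}(U\mid a,z,X)$, $P^b_{(K,[n])}(Y\mid a,z,U)$, and $P^b_{(K,J)}(Y\mid a,z,X)$ all act on the $n$-dimensional space indexed by $U$ and can be inverted consistently. No deep causal argument is needed beyond the two stated conditional independences; the content of the lemma is purely algebraic once those independences are unpacked.
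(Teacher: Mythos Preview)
Your proposal is correct and follows essentially the same two-step argument as the paper's proof: first using $x \perp (w, z', o) \mid a, z, u$ to peel off $P^b_{([n],J)}(U\mid a,z,X)^{-1}$, then using $y \perp x \mid a, z, u$ to rewrite that inverse as $P^b_{(K,J)}(Y\mid a,z,X)^{-1}P^b_{(K,[n])}(Y\mid a,z,U)$. One small remark: your first equality actually requires $x \perp (w, z', o) \mid a, z, u$ (with $z'$ included), not just $x \perp (w, o) \mid a, z, u$ as literally stated in the lemma; the paper's proof makes this stronger independence explicit, and you should too.
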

\begin{proof}
We start by showing that
\begin{enumerate}
\item If $x \perp (w, z', o) | a, z, u$ and if $P^b_{([n], J)}(U| a, z, X)$ is invertible for every $a, z$, then
\begin{equation*}
    P^b_{(I, [n])}(W, z', o | a, z, U) 
    =
    P^b_{(I, J)}(W, z', o | a, z, X)P^b_{[n], J}(U| a, z, X)^{-1}.
\end{equation*}
 \item If $x \perp y | a, z, u$ and if $P^b_{(J, [n])}(X | a, z, U)$ is invertible for every $a, z$, then
\begin{equation*}
P^b_{([n], J)}(U | a, z, X) 
=
P^b_{(K, [n])}(Y | a, z, U)^{-1}P^b_{(K, J)}(Y | a, z, X).
\end{equation*}
\end{enumerate}
We have that
\begin{align*}
P^b(w, z', o | a, z, U)P^b(U| a, z, x) 
&= 
P^b(w, z', o | a, z, x, U)P^b(U| a, z, x)  \\
&=P^b(w, z', o | a, z, x)
\end{align*}
The above is true for every $\{x, w\}$, therefore
\begin{equation}
P^b_{(I, [n])}(W, z', o | a, z, U)
=
P^b_{(I, J)}(W, z', o | a, z, X)P^b_{([n], J)}(U| a, z, X)^{-1}.
\label{eq: lemma_part1b}
\end{equation}
Similarly, for the second part, we have that
\begin{align*}
P^b(y| a, z, U)
P^b(U | a, z, x) 
&=
P^b(y | a, z, x, U)
P^b(U | a, z, x) \\
&=
P^b(y | a, z, x).
\end{align*}
The above is true for every $\{x, y\}$, therefore
\begin{equation}
P^b_{([n], J)}(U | a, z, X) 
=
P^b_{(K, [n])}(Y | a, z, U)^{-1}P^b_{(K, J)}(Y | a, z, X).
\label{eq: lemma_part2b}
\end{equation}
Combining Equations~\eqref{eq: lemma_part1b} and \eqref{eq: lemma_part2b} yields
\begin{equation*}
    P^b_{(I, [n])}(W, z', o | a, z, U)
    =
    P^b_{(I, J)}(W, z', o | a, z, X)P^b_{(K, J)}(Y | a, z, X)^{-1}P^b_{(K, [n])}(Y | a, z, U).
\end{equation*}
\end{proof}

\noindent We can now continue to prove Lemma~\ref{lemma: multiplication2}.

\begin{proof}[Proof of Lemma~\ref{lemma: multiplication2}]
Let $x_i, y_i$ such that $x_i \perp (u_{i+1}, o_i) | z_{i+1}, a_i, u_i$ and $y_i \perp x_i | z_{i+1}, a_i, u_i$. Assume that the matrices $P^b_{([n], J_i)}(U_i| a_i, z_i, X_i), P^b_{(K_i, [n])}(Y_i | a_i, z_i, U_i), P_{(K_i,J_i)}^b(Y_i | a_i, z_i, X_i)$ are invertible. Then, by Lemma~\ref{lemma: factorization2} 
\begin{equation*}
    P(U_{i+1}, z_{i+1}, o_i | a_i, z_i, U_i)
    =
    P_{([n],J_i)}^b(U_{i+1}, z_{i+1}, o_i | a_i, z_i, X_i)P_{(K_i,J_i)}^b(Y_i | a_i, z_i, X_i)^{-1}P_{(K_i, [n])}^b(Y_i | a_i, z_i, U_i).
\end{equation*}
Next we wish to evaluate
\begin{align}
    &P(U_{i+1}, z_{i+1}, o_i | a_i, z_i, U_i)
    P(U_{i}, z_{i}, o_{i-1} | a_{i-1}, z_{i-1}, U_{i-1}) \\
    &=
    P_{([n],J_i)}^b(U_{i+1}, z_{i+1}, o_i | a_i, z_i, X_i)
    P_{(K_i,J_i)}^b(Y_i | a_i, z_i, X_i)^{-1}
    P_{(K_i, [n])}^b(Y_i | a_i, z_i, U_i) \times \\
    &~~~~~ 
    P_{([n],J_{i-1})}^b(U_{i}, z_{i}, o_{i-1} | a_{i-1}, z_{i-1}, X_{i-1})
    P_{(K_{i-1},J_{i-1})}^b(Y_{i-1} | a_{i-1}, z_{i-1}, X_{i-1})^{-1}
    P_{(K_{i-1}, [n])}^b(Y_{i-1} | a_{i-1}, z_{i-1}, U_{i-1}).
    \label{eq: lemma4 result ref1}
\end{align}
For this, let us evaluate
\begin{equation*}
    P_{(K_i, [n])}^b(Y_i | a_i, z_i, U_i)
    P_{([n],J_{i-1})}^b(U_{i}, z_{i}, o_{i-1} | a_{i-1}, z_{i-1}, X_{i-1}).
\end{equation*}
Assume that $y_i \perp (a_i, a_{i-1}, x_{i-1}, z_{i-1}, z_i) | u_i$, then
\begin{align*}
&
\sum_{u_i \in \U}
P^b(y_i| a_i, z_i, u_i)
P^b(u_i, z_i, o_{i-1} | a_{i-1}, z_{i-1}, x_{i-1}) \\
&=
\sum_{u_i \in \U}
P^b(y_i| u_i, z_i, o_{i-1}, a_{i-1}, z_{i-1}, x_{i-1})
P^b(u_i, z_i, o_{i-1} | a_{i-1}, z_{i-1}, x_{i-1}) \\
&=
\sum_{u_i \in \U}
P^b(y_i, u_i, z_i, o_{i-1} | a_{i-1}, z_{i-1}, x_{i-1}) \\
&=
P^b(y_i, z_i, o_{i-1} | a_{i-1}, z_{i-1}, x_{i-1}).
\end{align*}
Therefore
\begin{align*}
P_{(K_i, [n])}^b(Y_i | a_i, z_i, U_i)
P_{([n],J_{i-1})}^b(U_{i}, z_{i}, o_{i-1} | a_{i-1}, z_{i-1}, X_{i-1})
=
P^b_{(K_i, J_{i-1})}(Y_i, z_i, o_{i-1} | a_{i-1}, z_{i-1}, X_{i-1}).
\end{align*}
This proves the first part of the lemma. The second part immediately follows due to Equation~\eqref{eq: lemma4 result ref1}. That is,
\begin{align*}
    &P(U_{i+1}, z_{i+1}, o_i | a_i, z_i, U_i)
    P(U_{i}, z_{i}, o_{i-1} | a_{i-1}, z_{i-1}, U_{i-1}) \\
    &=
    P_{([n],J_i)}^b(U_{i+1}, z_{i+1}, o_i | a_i, z_i, X_i)
    P_{(K_i,J_i)}^b(Y_i | a_i, z_i, X_i)^{-1}
    P^b_{(K_i, J_{i-1})}(Y_i, z_i, o_{i-1} | a_{i-1}, z_{i-1}, X_{i-1}) \times \\
    &~~~~~ 
    P_{(K_{i-1},J_{i-1})}^b(Y_{i-1} | a_{i-1}, z_{i-1}, X_{i-1})^{-1}
    P_{(K_{i-1}, [n])}^b(Y_{i-1} | a_{i-1}, z_{i-1}, U_{i-1}).
\end{align*}
As the above holds for all $i \geq 1$, the proof is complete. The proof for the third part follows the same steps with $(u_{t+1}, z_{t+1})$ replaced by $r_t$.
\end{proof}


\newpage
\section{Bridging the Gap between Reinforcement Learning and Causal Inference}

This section is devoted to addressing the definition and results of this paper in terminology common in the Causal Inference literature \citep{Pearl:2009:CMR:1642718,peters2017elements}. We begin with preliminaries on Structural Causal Models and Pearl's Do-Calculus, and continue by defining OPE as an identification problem for Causal Inference. Finally, we show that results presented in this paper relate to the counterfactual effect of applying the dynamic treatment $\pi_e$ \citep{hernan2019causality}.

\subsection{Preliminaries}

The basic semantical framework of our analysis relies on Structural Causal Models \citep{Pearl:2009:CMR:1642718}.
\begin{definition}[Structural Causal Models]
A structural causal model (SCM) $M$ is a 4-tuple $\langle U, V, F, P(U) \rangle$ where:
\begin{itemize}
    \item $U$ is a set of exogenous (unobserved) variables, which are determined by factors outside of the model.
    \item $V$ is a set $\{V_i\}_{i=1}^n$ of endogenous (observed) variables that are determined by variables in $U \cup V$.
    \item $F$ is a set of structural functions $\{f_i\}_{i=1}^n$, where each $f_i$ is a process by which $V_i$ is assigned a value ${v_i \gets f_i(pa^i, u^i)}$ in response to the current values of its parents $PA^i \subset V$ and ${U^i \subseteq U}$.
    \item $P(U)$ is a distribution over the exogenous variables $U$.
\end{itemize}
\end{definition}

Consider the causal graph of Figure~\ref{fig: Decoupled POMDP}a. This causal graph corresponds to an SCM that defines a complete data-generating processes $P^b$, which entails the observational distribution. It also defines the interventional distribution $P^e$: under $P^e$, the arrows labeled $\pi_e$ exist and denote a functional relationship between $a_t$ and $z_t$ given by the evaluation policy $\pi_e$, and the arrows labeled $\pi_b$ do not exist.

Queries are questions asked based on a specific SCM, and are often related to interventions, which can be thought of as idealized experiments, or as well-defined changes in the world. Formally, interventions take the form of fixing the value of one variable in an SCM and observing the result. The do-operator is used to indicate that an experiment explicitly modified a variable. Graphically, this blocks any causal factors that would otherwise affect that variable. Diagramatically, this erases all causal arrows pointing at the experimental variable. 
\begin{definition}[Interventional Distribution]~\\
Given an SCM, an intervention $I = \pearldo{x_i := \tilde{f}(\widetilde{PA}^i, \tilde{U}_i)}$ corresponds
to replacing the structural mechanism $f_i(PA^i, U^i)$ with $\tilde{f}_i(\widetilde{PA}^i, U^i)$. This includes the concept of atomic interventions, where we may write more simply $\pearldo{X_i = x}$.
\end{definition}
The interventional distribution is subsumed by the counterfactual distribution, which asks in retrospective what might have happened had we acted differently at the specific realization. Table~\ref{table: three layers} shows the 3-layer hierarchy of \citet{pearl2018theoretical}, together with the characteristic questions that can be answered at each level. While work such as \citet{oberst2019counterfactual} is centered around the counterfactual layer, this paper focuses on the interventional layer associated with the syntactic signature of the type $P(y|\pearldo{x}, z)$.

The do-calculus is the set of manipulations that are available to transform one expression into another, with the general goal of transforming expressions that contain the do-operator into expressions that do not contain it, and which involve only observable quantities. Expressions that do not contain the do-operator and include only observable quantities can be estimated from observational data alone, without the need for an experimental intervention. The do-calculus includes three rules for the transformation of conditional probability expressions involving the do-operator. They are stated formally below. Let $x, y, z, w$ be nodes in an SCM $G$. 
\begin{itemize}
\item \textbf{Rule 1 (Insertion/deletion of observations)}

$P(y|\pearldo{x},z,w)=P(y|\pearldo{x},w)$ if $y$ and $z$ are $d$-separated by $x\cup w$ in $G^*$, the graph obtained from $G$ by removing all arrows pointing into variables in $x$.

\item \textbf{Rule 2 (Action/observation exchange)}

$P(y|\pearldo{x},\pearldo{z},w)=P(y|\pearldo{x},z,w)$ if $y$ and $z$ are $d$-separated by $x \cup w$ in $G^\dagger$, the graph obtained from $G$ by removing all arrows pointing into variables in x and all arrows pointing out of variables in $z$.

\item \textbf{Rule 3 (Insertion/deletion of actions)}

$P(y|\pearldo{x},\pearldo{z},w)=P(y|\pearldo{x},w)$ if $y$ and $z$ are $d$-separated by $x \cup w$ in $G^\bullet$, the graph obtained from $G$ by first removing all the arrows pointing into variables in $x$ (thus creating $G^*$) and then removing all of the arrows pointing into variables in $z$ that are not ancestors of any variable in $w$ in $G^*$.
\end{itemize}

\newpage
\noindent In addition to the above, letting $\pi_e$ be a stochastic time-dependent evaluation policy, then under the SCM of Figure~\ref{fig: Decoupled POMDP}a we have the following lemma (we use $z_{0:t}$ to denote the set $\{z_0, \hdots z_t\}$).
\begin{lemma}
\label{lemma: do pi}
\begin{equation*}
    P(x_t | \pearldo{\pi}, z_{0:t})
    =
    \sum_{a_0, \hdots, a_t} 
    P\pth{x_t | \pearldo{a_0}, \hdots, \pearldo{a_t},  z_{0:t}}
    \prod_{i = 0}^t \pie{i}{a_i | h_i^o}.
\end{equation*}
\end{lemma}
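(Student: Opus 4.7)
The plan is to apply the law of total probability over action sequences, and then separately identify two factors: (i) the distribution of actions under $\pearldo{\pi}$ given $z_{0:t}$, which should factor into the policy probabilities, and (ii) the distribution of $x_t$ given both $\pearldo{\pi}$ and a fixed action sequence, which should collapse to the atomic do-intervention form $P(x_t \mid \pearldo{a_0}, \hdots, \pearldo{a_t}, z_{0:t})$. The identity then falls out by multiplying and summing.

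First I would write
\begin{equation*}
P(x_t \mid \pearldo{\pi}, z_{0:t})
=
\sum_{a_0, \hdots, a_t}
P(x_t \mid \pearldo{\pi}, z_{0:t}, a_{0:t})\,
P(a_{0:t} \mid \pearldo{\pi}, z_{0:t}).
\end{equation*}
For the second factor, I would use the definition of $\pearldo{\pi}$: under the policy intervention, the structural mechanism for each $a_i$ is replaced by $a_i \sim \pie{i}{\cdot \mid h_i^o}$ with fresh independent randomness, where $h_i^o$ is a function of $z_{0:i}$ and $a_{0:i-1}$ alone. Chain-ruling over $i$ and using this fact at each step gives
\begin{equation*}
P(a_{0:t} \mid \pearldo{\pi}, z_{0:t})
=
\prod_{i=0}^{t}
P(a_i \mid \pearldo{\pi}, z_{0:t}, a_{0:i-1})
=
\prod_{i=0}^{t}\pie{i}{a_i \mid h_i^o}.
\end{equation*}

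Next I would argue that conditioning on $a_{0:t}$ under $\pearldo{\pi}$ is equivalent to performing atomic interventions $\pearldo{a_0}, \hdots, \pearldo{a_t}$. The point is that in the mutilated graph associated with $\pearldo{\pi}$, the only parents of each $a_i$ are $h_i^o$ and the exogenous policy noise $\epsilon_i$; since the $\epsilon_i$ are independent of all other exogenous variables, conditioning on the value of $a_i$ (with $h_i^o$ already observed) is equivalent to fixing $a_i$ by an atomic do-intervention. Formally this is an application of Rule 2 of the do-calculus applied iteratively: after the $\pearldo{\pi}$ mutilation, $a_i$ and $x_t$ are d-separated in the appropriately modified graph by $z_{0:t} \cup a_{0:i-1} \cup a_{i+1:t}$, licensing the exchange $a_i \leadsto \pearldo{a_i}$ inside the conditioning set. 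Iterating over $i = 0, \hdots, t$ yields
\begin{equation*}
P(x_t \mid \pearldo{\pi}, z_{0:t}, a_{0:t})
=
P(x_t \mid \pearldo{a_0}, \hdots, \pearldo{a_t}, z_{0:t}).
\end{equation*}

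Substituting these two identifications back into the total-probability decomposition produces the stated equality. The main obstacle is step (ii): the do-calculus justification that the policy-conditional $P(x_t \mid \pearldo{\pi}, z_{0:t}, a_{0:t})$ coincides with the fully atomic-intervened distribution. This requires a careful d-separation argument in the mutilated graph and an inductive application of Rule 2 to peel off the $a_i$'s one at a time; the factorization of the action distribution and the final summation step are then routine.
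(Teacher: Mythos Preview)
Your decomposition is the same strategy the paper uses, just organized differently: the paper argues by induction on $k$, at each step replacing the soft intervention $\pearldo{\pi_e^{(k)}}$ by the mixture $\sum_{a_k}\pie{k}{a_k\mid h_k^o}\,\pearldo{a_k}$ (justified tersely by ``$\pi_e^{(i)}$ depends only on its previously observed history''), whereas you do it in one shot via total probability over $a_{0:t}$ and then identify the two factors separately. Your step (ii), showing $P(x_t \mid \pearldo{\pi}, z_{0:t}, a_{0:t}) = P(x_t \mid \pearldo{a_{0:t}}, z_{0:t})$, is correct and is really where the substance lies.

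The gap is in your step (i). The claimed factorization $P(a_i \mid \pearldo{\pi}, z_{0:t}, a_{0:i-1}) = \pie{i}{a_i\mid h_i^o}$ is not valid in general: in the $\pearldo{\pi}$-mutilated graph, the observations $z_{i+1},\ldots,z_t$ are descendants of $a_i$ (through $u_{i+1:t}$), so conditioning on them shifts the posterior of $a_i$ away from the policy prior $\pie{i}{\cdot\mid h_i^o}$. The paper's one-line inductive step glosses over exactly the same issue. Indeed, the identity as literally written fails on simple instances (take $t=1$, deterministic $u_1=a_0$, noisy emissions, and a uniform $\pi_e^{(0)}$; then $P(u_1\mid \pearldo{\pi},z_0,z_1)$ and $\sum_{a_0}\pie{0}{a_0\mid z_0}P(u_1\mid \pearldo{a_0},z_0,z_1)$ differ). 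What rescues all of the paper's downstream uses is that there the conditioning set additionally contains $u_{0:i}$ (see the causal proof of Lemma~\ref{lemma: do_r}), which d-separates each $a_k$ from the later observations; your argument, and the paper's, would be correct once that extra conditioning is present.
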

\begin{proof}
We have that
\begin{equation*}
    P(x_t | \pearldo{\pi}, z_{0:t}) 
    =
    P\pth{x_t | \pearldo{\pi_e^{(0)}}, \hdots, \pearldo{\pi_e^{(t)}}, z_{0:t}},
\end{equation*}
where we recall that $\pi_e^{(i)}$ is the (possibly time-dependent) policy at time $i$.
It is enough to show that for all $0 \leq k \leq t$
\begin{align}
    &P(x_t | \pearldo{\pi_e^{(0)}}, \hdots, \pearldo{\pi_e^{(t)}}, z_{0:t})\\
    &= 
    \sum_{a_0, \hdots, a_k} 
    P\pth{x_t | \pearldo{a_0}, \hdots \pearldo{a_k}, \pearldo{\pi_e^{(k+1)}}, \hdots, \pearldo{\pi_e^{(t)}}, z_{0:t}} \prod_{i=0}^k \pie{i}{a_0 | h_i^o}.
\label{eq: induction}
\end{align}
Equation~\eqref{eq: induction} follows immediately by induction over $k$, as $\pi_e^{(i)}$ depends only on its previously observed history.
\end{proof}

\renewcommand*{\arraystretch}{1.3}
\begin{table}[t!]
\begin{center}
\begin{tabular}{ccc}
    \hline
    Level (Symbol) &
    Typical Activity &
    Typical Questions 
    \\
    \hline
    \multirow{3}{10em}{\centering 1. Association \\ $P(y|x)$} &
    \multirow{3}{10em}{\centering Seeing} &
    \multirow{3}{15em}{\centering What is? \\ How would seeing x \\ change my belief in y?} 
    \\ \\ \\
    \hline
    \multirow{2}{10em}{\centering *2. Intervention \\ $P(y|\pearldo{x}, z)$} &
    \multirow{2}{10em}{\centering Doing \\ Intervening} &
    \multirow{2}{15em}{\centering What if? \\ What if I do x?}  
    \\ \\
    \hline 
    \multirow{3}{10em}{\centering 3. Counterfactual \\ $P(y_x | x', y')$ } &
    \multirow{3}{10em}{\centering Imagining \\ Retrospection  } &
    \multirow{3}{15em}{\centering Why? \\ Was it x that caused y? \\ What if I had acted differently?} 
    \\ \\ \\
    \hline
\end{tabular}
\end{center}
\caption{The three layer causal hierarchy, as given in \protect\citet{pearl2018theoretical}. OPE in the form we give is part of the second, interventional layer. }
\label{table: three layers}
\end{table}

\subsection{OPE as an Identification Problem}
\noindent In order to define OPE as an intervention problem, we define the interventional value of a policy $\pi$ given an SCM as
\begin{equation*}
    v^{\pearldo{\pi}} = \E \pth{ R_L(\tau) ~\middle|~ \pearldo{\pi}}.
\end{equation*}
We are now ready to define OPE in POMDPs. Let $\pi_e, \pi_b$ be evaluation policies as defined in Definition~\ref{def: policy types}.
\\
\begin{adjustwidth}{120pt}{120pt}
\textit{The goal of off-policy evaluation in POMDPs is to evaluate $v^{\pearldo{\pi_e}}$ for the SCM of Figure~\ref{fig: Decoupled POMDP}a under the data generating process $P^b$}.
\end{adjustwidth}
~\\
We can now restate the results presented in the paper using the above terminology. We restate Theorem~\ref{thm: main result} and Lemma~\ref{lemma: do_r} below. An extension to Theorem~\ref{thm: main result2} and Lemma~\ref{lemma: do_r2} is done similarly.

\begin{theorem*}[POMDP Evaluation]
Assume $P^b(Z_i | a_i, Z_{i-1})$ and $P^b(U_i | a_i, Z_{i-1})$ are invertible for all $i$ and all $a_i \in \A$.
For any $\tau^o \in \T^o_t$ denote
 \begin{align*}
     &\Pi_e(\tau^o) = \prod_{i=0}^t \pie{i}{a_i | h_i^o}, ~
     &\Omega(\tau^o) = \prod_{i=0}^{t} W_{t-i}(\tau^o).
 \end{align*}
Then
\begin{equation*}
    P(r_t | \pearldo{\pi_e}) 
    =
    \sum_{\tau^o \in \T^o_t}
    \Pi_e(\tau^o)
    P^b(r_t, z_t | a_t, Z_{t-1})
    \Omega(\tau^o).
\end{equation*}
\end{theorem*}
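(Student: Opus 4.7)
The plan is to proceed in two stages. First, I would establish a lemma that rewrites $P^e(r_t)$ as an expectation under the behavior measure conditioned on the unobserved states $u_i$, by exploiting the fact that only the action probabilities change between $P^b$ and $P^e$. Concretely, using the chain rule on $P^e(\tau)$ and noting that the observation kernel $P(z_i \mid u_i)$, the transition kernel $P(u_{i+1} \mid u_i, a_i)$, and the reward kernel $P(r_t \mid a_t, u_t)$ are identical under $P^b$ and $P^e$, a backwards induction yields
\begin{equation*}
    P^e(r_t) = \sum_{\tau^o \in \T^o_t} \Pi_e(\tau^o)\, P^b(r_t, z_t \mid a_t, U_t) \left(\prod_{i=t-1}^{0} P^b(U_{i+1}, z_i \mid a_i, U_i)\right) P^b(U_0),
\end{equation*}
where I use the row/column vector notation from the preliminaries. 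This is essentially just bookkeeping using Markov independences.

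Second, I would adapt the proxy-variable identification strategy of Miao et al.\ to replace each unobserved-state-indexed matrix with an observable one. The core observation is that under the POMDP graph, past and future observations act as two conditionally independent views of the hidden state: $z_{i-1} \perp z_i \mid u_i, a_i$ and $z_{i-1} \perp u_{i+1} \mid u_i, a_i$. Combining these with invertibility of $P^b(Z_i \mid a_i, Z_{i-1})$ (which implies invertibility of $P^b(U_i \mid a_i, Z_{i-1})$ and $P^b(Z_i \mid a_i, U_i)$ through the decomposition $P^b(Z_i \mid a_i, Z_{i-1}) = P^b(Z_i \mid a_i, U_i) P^b(U_i \mid a_i, Z_{i-1})$), I would derive the key identity
\begin{equation*}
    P^b(U_{i+1}, z_i \mid a_i, U_i)\, P^b(U_i, z_{i-1} \mid a_{i-1}, U_{i-1}) = P^b(U_{i+1}, z_i \mid a_i, Z_{i-1}) W_i \, P^b(Z_{i-1} \mid a_{i-1}, U_{i-1}),
\end{equation*}
with $W_i$ as defined in the theorem. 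The proof uses two sub-identities: (i) replacing conditioning on $U_i$ by conditioning on $Z_{i-1}$ via $P^b(U_i \mid a_i, Z_{i-1})^{-1}$, and (ii) rewriting $P^b(U_i \mid a_i, Z_{i-1})$ itself as $P^b(Z_i \mid a_i, U_i)^{-1} P^b(Z_i \mid a_i, Z_{i-1})$. A short calculation contracting $P^b(Z_i \mid a_i, U_i)$ into the adjacent factor yields $P^b(Z_i, z_{i-1} \mid a_{i-1}, Z_{i-2})$, giving exactly $W_i$.

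Third, I would telescope this substitution through the product $\prod_{i=t-1}^{0} P^b(U_{i+1}, z_i \mid a_i, U_i)$: each application of the identity exposes one $W_i$ and pushes the ``hidden-state face'' one step back. The boundary term on the left (at $i = t$) produces $P^b(r_t, z_t \mid a_t, Z_{t-1})$, and on the right (at $i = 0$) the factor $P^b(Z_0 \mid a_0, U_0) P^b(U_0) = P^b(Z_0)$ collapses via $z_0 \perp a_0 \mid u_0$, yielding $W_0$. The main obstacle will be carefully checking that each conditional-independence assertion used in a substitution genuinely holds in the POMDP graph (in particular that $y_i \perp (a_i, a_{i-1}, z_{i-1}) \mid u_i$ when $y_i = z_i$), and that adjacent matrices line up dimensionally so that the telescoped product indeed equals $\prod_{i=0}^{t} W_{t-i}(\tau^o) = \Omega(\tau^o)$ in the order specified by the theorem.
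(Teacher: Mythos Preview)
Your proposal is correct and mirrors the paper's proof essentially step for step: the first stage is exactly Lemma~\ref{lemma: do_r}, the sub-identities (i) and (ii) together with the contraction are exactly Lemmas~\ref{lemma: factorization} and~\ref{lemma: multiplication} (with the paper's abstract $x_i,y_i$ specialized to $z_{i-1},z_i$), and the telescoping and boundary collapse are handled identically. The only minor slip is that your displayed two-step identity is missing a trailing factor $P^b(Z_{i-1}\mid a_{i-1},Z_{i-2})^{-1}$ on the right-hand side (which becomes the leading piece of $W_{i-1}$ in the next step of the telescope), but this does not affect the argument.
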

The proof of the theorem follows the same steps as in Appendix~A. Nevertheless, it requires an alternate interpretation and proof of Lemma~\ref{lemma: do_r}, which we state and prove formally below.

\begin{lemma}
    \begin{align*}
        P(r_t | \pearldo{\pi_e}) 
        &=
        \sum_{\tau^o \in \T^o_t}
        \pth{\prod_{i=0}^t \pie{i}{a_i | h_i^o)}}
        P^b(r_t, z_t | a_t, U_t)
        \pth{
        \prod_{i=0}^{t-1}
        P^b(U_{i+1}, z_i | a_i, U_i)
        }
        P^b(U_0).
    \end{align*}
    \label{lemma: do_r causal}
\end{lemma}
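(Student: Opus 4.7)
The plan is to derive the causal version from the same structural decomposition used for Lemma~\ref{lemma: do_r}, but with do-calculus replacing the direct appeal to the evaluation distribution $P^e(\tau)$. The starting point is Lemma~\ref{lemma: do pi}, which converts the policy-level intervention $\pearldo{\pi_e}$ into a sum over action sequences weighted by $\prod_{i=0}^t \pie{i}{a_i|h_i^o}$, each term being an atomic intervention $\pearldo{a_0, \ldots, a_t}$. So first I would write
\begin{equation*}
    P(r_t | \pearldo{\pi_e}) = \sum_{\tau^o \in \T_t^o} \Pi_e(\tau^o) \sum_{u_{0:t}} P\pth{r_t, u_{0:t}, z_{0:t} \mid \pearldo{a_{0:t}}},
\end{equation*}
which reduces the problem to identifying the joint distribution over states, observations, and reward under a full action intervention.

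Next I would factor this joint using the causal structure of the POMDP graph (Figure~\ref{fig: Decoupled POMDP}a). In the mutilated graph obtained by severing all arrows into $a_0, \ldots, a_t$, the hidden states form a Markov chain driven by the (now exogenous) actions, observations are emitted from states, and the reward depends only on $u_t, a_t$. Concretely, Rule~2 of do-calculus applied at each node where the intervened action is a parent gives
\begin{equation*}
    P(u_{i+1} | \pearldo{a_{0:t}}, u_i) = P^b(u_{i+1}|u_i, a_i), \quad P(z_i | \pearldo{a_{0:t}}, u_i) = P^b(z_i|u_i), \quad P(r_t | \pearldo{a_{0:t}}, u_t) = P^b(r_t|u_t,a_t),
\end{equation*}
and $P(u_0 | \pearldo{a_{0:t}}) = P^b(u_0)$ by Rule~3. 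Multiplying these factors together yields
\begin{equation*}
    P(r_t, u_{0:t}, z_{0:t} | \pearldo{a_{0:t}}) = P^b(r_t|u_t,a_t) \pth{\prod_{i=0}^t P^b(z_i|u_i)} \pth{\prod_{i=0}^{t-1} P^b(u_{i+1}|u_i,a_i)} P^b(u_0).
\end{equation*}

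Then I would use the conditional independences $z_i \perp (a_i, a_{i-1}) \mid u_i$ under $P^b$ to merge $P^b(r_t|u_t,a_t) P^b(z_t|u_t) = P^b(r_t, z_t | u_t, a_t)$ and $P^b(u_{i+1}|u_i,a_i)P^b(z_i|u_i) = P^b(u_{i+1}, z_i | u_i, a_i)$, which is exactly the grouping appearing in the statement. Marginalizing over $u_{0:t}$ and rewriting the result in the vector notation from the Preliminaries then produces the claimed product of matrices $P^b(U_{i+1}, z_i | a_i, U_i)$ applied to $P^b(U_0)$, with a final left-multiplication by $P^b(r_t, z_t | a_t, U_t)$.

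The main obstacle is bookkeeping the applications of do-calculus cleanly: I need to check the d-separation conditions in the mutilated graph for each Rule~2/Rule~3 step, particularly to ensure that intervening on \emph{all} actions (rather than only one) blocks every back-door path through $\pi_b$'s dependence on the hidden states. Once this is done, the remaining arithmetic is just the same algebraic manipulation as in the non-causal Lemma~\ref{lemma: do_r}, with $P^e(\tau)$ replaced by the factorized interventional joint, so the rest follows by induction on $t$ exactly as in Appendix~B.
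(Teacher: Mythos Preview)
Your approach is correct and reaches the same destination, but the route differs from the paper's. The paper first conditions on the full hidden-state sequence $u_{0:t}$, writes $P(r_t\mid\pearldo{\pi_e})=\sum_{u_{0:t}}P(r_t\mid\pearldo{\pi_e},u_{0:t})\prod_i P(u_{i+1}\mid\pearldo{\pi_e},u_{0:i})\,P(u_0)$, and only then invokes Lemma~\ref{lemma: do pi} \emph{inside} each conditional factor (together with Rules~1--3) to strip the policy-level intervention down to observational quantities. You instead pull the conversion to atomic interventions $\pearldo{a_{0:t}}$ all the way to the front and then read off the truncated factorization in the mutilated graph. Your ordering is cleaner and avoids carrying $\pearldo{\pi_e}$ through several intermediate conditionals; the paper's ordering stays closer to the stated form of Lemma~\ref{lemma: do pi} at each step.

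One small point to tighten: Lemma~\ref{lemma: do pi} as stated yields $P(x_t\mid\pearldo{\pi},z_{0:t})=\sum_{a_{0:t}}P(x_t\mid\pearldo{a_{0:t}},z_{0:t})\prod_i\pie{i}{a_i|h_i^o}$, i.e.\ a \emph{conditional} on $z_{0:t}$, whereas your first display asserts the \emph{joint} identity $P(r_t\mid\pearldo{\pi_e})=\sum_{\tau^o}\Pi_e(\tau^o)\,P(r_t,z_{0:t}\mid\pearldo{a_{0:t}})$. This joint version is true (it is the g-formula for dynamic regimes, and the same one-line induction that proves Lemma~\ref{lemma: do pi} proves it), but it is not literally Lemma~\ref{lemma: do pi}. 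Either note that the inductive proof extends verbatim to the joint, or derive your first display directly from the SCM factorization under $\pearldo{\pi_e}$; after that, the rest of your argument goes through exactly as written.
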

\begin{proof}
    Let $\pi_e$ be an evaluation policy.
\begin{align}
    P(r_t | \pearldo{\pi_e}) 
    &= 
    \sum_{u_{0:t}} P(r_t | \pearldo{\pi_e}, u_{0:t}) P(u_{0:t} | \pearldo{\pi_e}) \\
    &=
    \sum_{u_{0:t}} 
    P(r_t | \pearldo{\pi_e}, u_{0:t}) 
    P(u_0 | \pearldo{\pi_e})
    \prod_{i=0}^{t-1} P(u_{i+1} | \pearldo{\pi_e}, u_{0:i})
    \label{eq: global intervention}
\end{align}
$P(u_0 | \pearldo{\pi_e}) = P^b(u_0) = \nu_0(u_0)$ by definition and rule 3. Next we evaluate $P(u_{i+1} | \pearldo{\pi_e}, u_{0:i})$, using the conditional independence relations of the POMDP causal graph  Figure~\ref{fig: Decoupled POMDP}(a).
\begin{align}
    &P(u_{i+1} | \pearldo{\pi_e}, u_{0:i}) = \\
    &~~~~~=
    \sum_{z_{0:i} \in \mathcal{Z}^{i+1}} 
    P(u_{i+1} | \pearldo{\pi_e}, z_{0:i}, u_{0:i})
    P(z_i | \pearldo{\pi_e}, z_{0:i-1}, u_{0:i})
    P(z_{0:i-1} | \pearldo{\pi_e}, u_{0:i}). \label{eq: transition factorization}
\end{align}
Next, by Lemma~\ref{lemma: do pi} and the rules of do-calculus we have in the POMDP causal graph:
\begin{align}
    P(u_{i+1} | \pearldo{\pi_e}, z_{0:i}, u_{0:i})
    &~~~=
    \sum_{a_{0:i} \in \A^{i+1}}
    P(u_{i+1} | \pearldo{a_0},\hdots, \pearldo{a_i}, z_{0:i}, u_{0:i}) 
    \prod_{k=0}^{i}\pie{k}{a_k | h^o_k} \\
    &\underset{(\text{rule }3)}{=}
    \sum_{a_{0:i} \in \A^{i+1}}
    P(u_{i+1} | \pearldo{a_i}, z_{0:i}, u_{0:i}) 
    \prod_{k=0}^{i}\pie{k}{a_k | h^o_k} \\
    &~~~=
    \sum_{a_i \in \A}
    P(u_{i+1} | \pearldo{a_i}, z_{0:i}, u_{0:i}) 
    \pie{i}{a_i | h^o_i} \\
    &\underset{(\text{rule }1)}{=}
    \sum_{a_i \in \A}
    P(u_{i+1} | \pearldo{a_i}, u_i) 
    \pie{i}{a_i | h^o_i} \\
    &\underset{(\text{rule }2)}{=}
    \sum_{a_i \in \A}
    P^b(u_{i+1} | a_i, u_i) 
    \pie{i}{a_i | h^o_i} 
    \label{eq: factor1}.
\end{align}
Also, using the POMDP causal graph and by Lemma~\ref{lemma: do pi},
\begin{align}
    P(z_i | \pearldo{\pi_e}, z_{0:i-1}, u_{0:i})
    &~~~=
    \sum_{a_{0:i-1} \in \A^i}
    P(z_i | \pearldo{a_0}, \hdots, \pearldo{a_{i-1}}, z_{0:i-1}, u_{0:i})
    \prod_{k=0}^{i-1}\pie{k}{a_k | h^o_k} \\
    &\underset{(\text{rule }3)}{=}
    \sum_{a_{0:i-1} \in \A^i}
    P^b(z_i | z_{0:i-1}, u_{0:i}) 
    \prod_{k=0}^{i-1}\pie{k}{a_k | h^o_k} \\
    &~~~=
    P^b(z_i | u_i).
    \label{eq: factor2}
\end{align}
Plugging Equations~\eqref{eq: factor1}, \eqref{eq: factor2} into \eqref{eq: transition factorization} yields
\begin{align}
    P(u_{i+1} | \pearldo{\pi_e}, u_{0:i})
    &=
    \sum_{z_{0:i} \in \Z^{i+1}} 
    P(u_{i+1} | \pearldo{\pi_e}, z_{0:i}, u_{0:i})
    P(z_i | \pearldo{\pi_e}, z_{0:i-1}, u_{0:i})
    P(z_{0:i-1} | \pearldo{\pi_e}, u_{0:i}) \\
    &=
    \sum_{z_{0:i} \in \Z^{i+1}} 
    \sum_{a_i \in \A}
    P^b(u_{i+1} | a_i, u_i) 
    \pie{i}{a_i | h^o_i}
    P^b(z_i | u_i)
    P(z_{0:i-1} | \pearldo{\pi_e}, u_{0:i}) \\
    &=
    \sum_{z_i \in \Z} 
    \sum_{a_i \in \A}
    P^b(u_{i+1} | a_i, u_i) 
    \pie{i}{a_i | h^o_i}
    P^b(z_i | u_i)
    \sum_{z_{0:i-1} \in \Z^{i}} 
    P(z_{0:i-1} | \pearldo{\pi_e}, u_{0:i}) \\
    &=
    \sum_{z_i \in \Z} 
    \sum_{a_i \in \A}
    P^b(u_{i+1} | a_i, u_i) 
    \pie{i}{a_i | h^o_i}
    P^b(z_i | u_i) \\
    &=
    \sum_{z_i \in \Z} 
    \sum_{a_i \in \A}
    P^b(u_{i+1} | a_i, z_i, u_i) 
    \pie{i}{a_i | h^o_i}
    P^b(z_i | a_i, u_i)
    \label{eq: u-do factorization}.
\end{align}

\noindent
We continue by evaluating $P(r_t | \pearldo{\pi_e}, u_{0:t})$. Similar to before,
\begin{align}
    &P(r_t | \pearldo{\pi_e}, u_{0:t}) = \\
    &\underset{(\text{rule }1)}{=}
    P(r_t | \pearldo{\pi_e}, u_t) \\
    &~~~=
    \sum_{z_t \in \Z} 
    P(r_t | \pearldo{\pi_e}, u_t, z_t)
    P(z_t | \pearldo{\pi_e}, u_t) \\
    &~~~=
    \sum_{z_t \in \Z} 
    \sum_{a_t \in \A}
    P^b(r_t | a_t, u_t)
    \pie{t}{a_t | h_t^o}
    P^b(z_t | u_t) \\
    &~~~=
    \sum_{z_t \in \Z} 
    \sum_{a_t \in \A}
    P^b(r_t | a_t, z_t, u_t)
    \pie{t}{a_t | h_t^o}
    P^b(z_t | a_t, u_t)
    \label{eq: r-do factorization}.
\end{align}
Plugging Equations~\eqref{eq: u-do factorization}, \eqref{eq: r-do factorization} into Equation \eqref{eq: global intervention} yields
\begin{align*}
    P(r_t | \pearldo{\pi_e}) 
    &=
    \sum_{u_{0:t}} 
    P(r_t | \pearldo{\pi_e}, u_{0:t}) 
    P(u_0 | \pearldo{\pi_e})
    \prod_{i=0}^{t-1} 
    P(u_{i+1} | \pearldo{\pi_e}, u_{0:i}) \\
    &=
    \sum_{\tau \in \T_t}
    P^{b}(r_t, z_t | a_t, u_t)
   \pth{\prod_{i=0}^t \pie{i}{a_i | h_i^o}}
    \pth{\prod_{i=0}^{t-1} P^b({u_{i+1}, z_i | u_i, a_i})}\nu_0(u_0),
\end{align*}
which can be written in vector form as
\begin{align*}
    P(r_t | \pearldo{\pi_e}) 
    &=
    \sum_{\tau^o \in \T_t^o}
    \pth{\prod_{i=0}^t \pie{i}{a_i | h^o_i}}
    P^b(r_t, z_t | a_t, U_t)
    \pth{
    \prod_{i=0}^{t-1}
    P^b(U_{i+1}, z_i | a_i, U_i)
    }
    P^b(U_0)
\end{align*}

\noindent This completes the proof.
\end{proof}

\newpage
\section*{Appendix D: Experimental Details}

\begin{figure*}[h!]
    \begin{center}
    \begin{subfigure}{0.3\linewidth}
    \includegraphics[width=\linewidth]{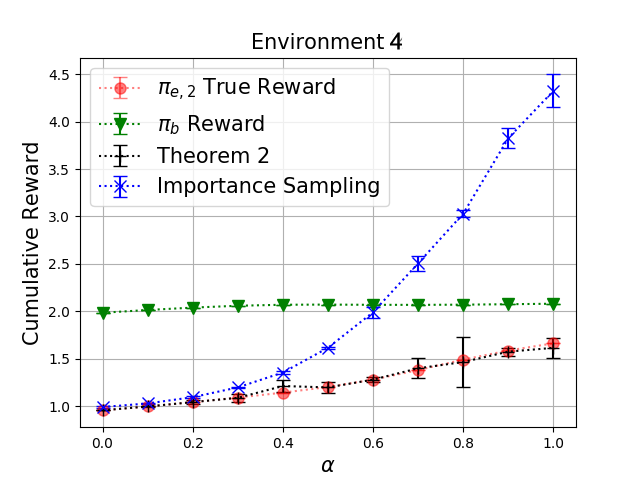}
    \end{subfigure}
    \begin{subfigure}{0.3\linewidth}
    \includegraphics[width=\linewidth]{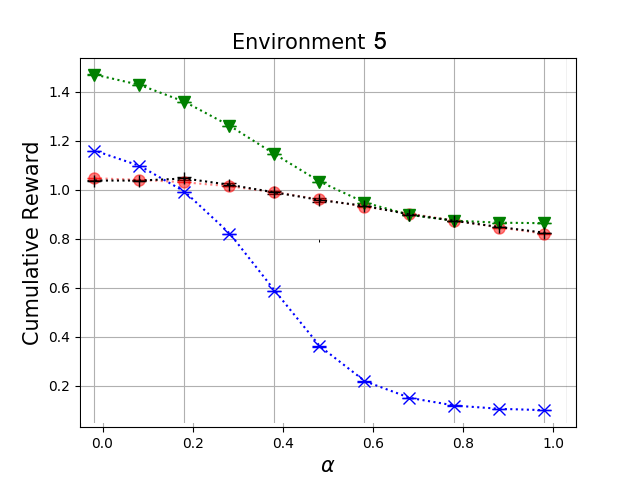}
    \end{subfigure}
    \label{fig: supp envs}
    \caption{}
    \end{center}
\end{figure*}

In our experiments we used three environments of different sampled vectors. Vectors were sampled uniformly from a normal distribution. We've chosen to depict in our paper, environments that had different characteristics of our results. Nevertheless, most sampled environments were unbiased for Theorem~\ref{thm: main result2}'s result, and highly biased for the IS estimator. Figure~4 depicts two other environments for which IS is highly biased. We also provide code of our experiments and medical environment in the supplementary material.


\end{document}